\documentclass{article}

\usepackage{microtype}
\usepackage{graphicx}
\usepackage{subcaption}
\usepackage{booktabs} 

\usepackage{multirow}
\usepackage{multicol}
\usepackage{setspace}

\usepackage{hyperref}
\usepackage[T1]{fontenc}
\usepackage[utf8]{inputenc}

\usepackage{xcolor} 

\usepackage[accepted]{icml2026}

\usepackage{amsmath}
\usepackage{amssymb}
\usepackage{mathtools}
\usepackage{amsthm}

\usepackage[capitalize,noabbrev]{cleveref}

\theoremstyle{plain}
\newtheorem{theorem}{Theorem}[section]

\newtheorem{lemma}[theorem]{Lemma}

\theoremstyle{definition}

\newtheorem{assumption}[theorem]{Assumption}
\theoremstyle{remark}

\usepackage[textsize=tiny]{todonotes}

\icmltitlerunning{Offline Multi-agent Continual Cooperation via Skill Partition and Reuse}

\begin{document}

\twocolumn[
  \icmltitle{Offline Multi-agent Continual Cooperation via Skill Partition and Reuse}



  \icmlsetsymbol{equal}{*}

  \begin{icmlauthorlist}
    \icmlauthor{Yuchen Xiao}{nkl,njuai}
    \icmlauthor{Lei Yuan}{nkl,njuai,pol}
    \icmlauthor{Ruiqi Xue}{nkl,njuai}
    \icmlauthor{Tieyue Yin}{kym}
    \icmlauthor{Yang Yu}{nkl,njuai,pol}
  \end{icmlauthorlist}

  \icmlaffiliation{nkl}{National Key Laboratory for Novel Software Technology, Nanjing University}
  \icmlaffiliation{njuai}{School of Artificial Intelligence, Nanjing University}
  \icmlaffiliation{pol}{Polixir Technologies.}
  \icmlaffiliation{kym}{Kuang Yaming Honors School, Nanjing University}

  \icmlcorrespondingauthor{Lei Yuan}{yuanl@lamda.nju.edu.cn}

  \icmlkeywords{Multi-agent, Offline, Continual Learning, Skill Reuse}

  \vskip 0.3in
]



\printAffiliationsAndNotice{}  

\begin{abstract}
Extracting skills from multi-agent offline dataset improves learning efficiency via sharing task-invariant coordination skills among tasks. In settings where tasks occur sequentially and the space of skills grows exponentially, existing approaches that rely on heuristically designed and fixed-sized skill libraries struggle to resolve the problem of distributional shift and interference, facing catastrophic forgetting and plasticity loss. To address this problem and endow agents with the ability to continually discover and reuse coordination skills in open-environment, we propose COMAD, a principled framework for \textbf{C}ontinual \textbf{O}ffline \textbf{M}ulti-\textbf{a}gent Skill \textbf{D}iscovery via Skill Partition and Reuse. 
We first discover skills from mixed multi-agent behavior data with an auto-encoder to transform coordination knowledge into reusable coordination skills. 
Then we construct a skill-augmented policy learning objective with multi-head architectures, explicitly guiding the advantage function with reusable skills identified via a density-based reusability estimator.
Theoretical analysis shows our method approximates the optimum of a continual skill discovery problem.
Empirical results across diverse MARL benchmarks show that COMAD continually expands its skill library to mitigate interference, achieving superior forward and backward transfer for task streams compared to multiple baselines. Code is available at \url{https://github.com/yuchen2003/comad-icml26}.
\end{abstract}

\section{Introduction}
Cooperative Multi-agent Reinforcement Learning (MARL) has achieved remarkable success in solving complex cooperative tasks, such as transportation scheduling, robotics, and games~\cite{albrecht2024multi}. Yet, the prohibitively high sample complexity of online interactions impedes its deployment in real-world applications. Consequently, offline Reinforcement Learning (RL)~\cite{levine2020offline} and offline MARL~\cite{prudencio2023survey} have emerged as promising paradigms that extract optimal policies directly from pre-collected datasets. Unlike the online setting, where agents can improve their policies in a trial-and-error manner, offline agents face the challenge of distribution shift between the learned policy and the behavior policy. This challenge is further amplified in MARL due to the curse of dimensionality: as the joint state-action space grows exponentially with the number of agents, the extrapolation error caused by distribution shift becomes increasingly severe.

By treating action chunks as abstractions across the temporal dimension, skill discovery has demonstrated significant improvements in sample efficiency within complex settings~\cite{pateria2021hierarchical,mohan2024structure}. Typical methods primarily discover skills without a reward function in an unsupervised manner, covering the action space to facilitate downstream policy learning and improve generalization~\cite{eysenbach2018diversity-diayn}. Some works focus on skill discovery from offline data to enhance learning efficiency by learning temporal action embeddings~\cite{ajay2020opal,kim2023learning}; these methods heuristically design a fixed-size embedding space to span the skill space, outperforming naive RL in sample efficiency. Unlike single-agent scenarios, offline skill discovery in multi-agent systems is more complex due to agent interactions. Previous works mainly discover skills at the agent level by learning temporal abstractions—similar to the single-agent setting~\cite{chen2024variational,meng2023m3}—or at the team level to learn spatial coordination skills~\cite{zhang2023discovering-odis,liu2025learning-hissd}, showing improvements in both learning efficiency and generalization.

 However, the aforementioned methods implicitly assume that all necessary coordination skills can be captured within a pre-defined skill library, limiting their applicability in open-ended environments where new tasks arrive sequentially~\cite{yuan2023survey}. Concretely, when agents learn in an open-ended setting, the required skills may shift according to task characteristics. Agents may lose their plasticity to accommodate emerging skills, or suffer from severe interference when directly fine-tuning skill representations, inevitably leading to catastrophic forgetting of previously acquired skills. To address this, works such as Lotus~\cite{wan2024lotus} construct an ever-growing skill library from a sequence of new tasks using limited human demonstrations, while TAIL~\cite{TAIL} designs Task-specific Adapters for Imitation Learning, demonstrating high adaptability. Although these methods are effective to some extent, they rely heavily on pre-trained models, which is often ill-suited for the MARL, where tasks vary not only in the underlying MDP but also in their high-level semantics, leading to distinct equilibrium solutions~\cite{zhang2021multi}. 

To endow agents with the ability to continuously learn coordination skills from offline data, agents should identify the tasks correctly and retrieve task-invariant coordination patterns hidden within diverse tasks. We first formalize the problem as learning from an infinite stream of multi-agent tasks, modeled as a series of decentralized partially observable Markov decision processes (Dec-POMDPs). We then propose a principled framework for Continual Offline Multi-agent Skill Discovery via skill partition and reuse (COMAD). COMAD partitions different skills using multi-head architectures and selectively reuses them via reusability estimation. To discover skills from task-specific behavior data, we utilize a variational auto-encoder (VAE)~\cite{kingma2013auto-vae} to encode global state-action pairs into skill representations and decode them via an action decoder, while a local encoder infers skills using only local information. Crucially, to handle the expanding skill space, we design an actor equipped with multi-head modules and a reusability estimator. These modules allow us to estimate the reusability of previously acquired skills for new tasks and leverage guidance from these skills for effective transfer and deployment.
Our theoretical analysis reveals that skills can be optimally transferred via reusability estimation and skill guidance. Extensive experiments on  distinct multi-agent environments demonstrate the superior performance compared to different baselines.
\section{Related Work}
\paragraph{Multi-agent Reinforcement Learning}
utilizes RL to address multi-agent problems. Unlike single-agent settings, MARL faces the curse of dimensionality in the joint state-action space, which stems from the growing number of agents. To overcome this challenge, typical works utilize value decomposition~\cite{sunehag2017value-vdn,rashid2018qmix} or policy decomposition~\cite{wang2021dop} to transform the complex high-dimensional joint space into tractable low-dimensional representations, demonstrating high learning efficiency in fields such as autonomous driving, financial trading, and embodied intelligence~\cite{feng2025multi}. Nevertheless, the practical utility of online MARL methods is limited by the requirement for costly online interactions. Consequently, offline MARL has emerged to enhance data efficiency by learning directly from historical data. Typical approaches focus on frameworks that integrate multi-agent decomposition methods with implicit regularization. For instance, Implicit Constraint Q-learning (ICQ)~\cite{yang2021believe-ICQ} proposes to implicitly constrain local policy learning via a constrained Q-learning objective. Furthermore, OMIGA~\cite{wang2023offline-omiga} applies a behavior-regularized framework to implicitly constrain the value function, deriving an elegant objective that bypasses the computationally intractable partition function.

\paragraph{Continual Reinforcement Learning} 
addresses the challenge of learning from a sequential stream of tasks in open-ended environments~\cite{zhou2022open}, where environmental dynamics and objectives may evolve dynamically. In such settings, agents must maintain a critical balance between stability (preserving memory of previous tasks) and plasticity (the ability to learn new tasks)~\cite{pan2025survey}. Typical approaches include parameter regularization~\cite{kirkpatrick2017overcoming-ewc}, experience replay~\cite{chen2024stable-distr}, parameter isolation~\cite{kessler2022same-owl}, and gradient projection~\cite{saha2023continual-sgp}, demonstrating promise in enabling effective decision-making within open-ended environments. While the majority of existing methods focus on single-agent settings, several recent works have directed attention toward continual MARL. For instance, MACPro~\cite{yuan2024multiagent-macpro} proposes to identify task contexts and dynamically expand the network architecture for adaptation in an online way. Similarly, RPG~\cite{yao2025general-rpg} extracts task-agnostic relational patterns and subsequently generates task-specific decision-makers via a conditional hypernetwork to guide adaptation, showing the continual learning ability in MARL.


\paragraph{Skill Discovery}
learns action chunks via temporal abstraction, typically adopting hierarchical structures to guide the agent's decision-making~\cite{pateria2021hierarchical,mohan2024structure}. Foundational methods generally discover skills via unsupervised learning objectives, utilizing them to promote exploration~\cite{eysenbach2018diversity-diayn} or facilitate downstream task adaptation~\cite{wan2024lotus}. Recent advancements have extended this paradigm to offline data; for instance, HILP~\cite{park2024foundation-hilp} learns Hilbert representations to enable zero-shot adaptation. Furthermore, skill discovery in MARL focuses on decomposing the complex joint action space into composable subspaces to facilitate cross-agent and cross-task transfer. Specifically, ODIS~\cite{zhang2023discovering-odis} first discovers skills via a Variational Auto-Encoder (VAE) and subsequently learns a coordination policy based on these latent skills. Similarly, HiSSD~\cite{liu2025learning-hissd} learns both common skills and task-specific skills to enable fine-grained multi-agent cooperation. However, current approaches often assume a static and fixed skill library, thereby limiting their applicability in open-ended environments, where new task scenarios and novel skills may continuously emerge~\cite{yuan2023survey}.

\section{Problem Formulation}\label{sec:formulate}
This paper considers the offline multi-agent skill discovery problem, formulated as a decentralized partially observable Markov decision process (Dec-POMDPs)~\cite{oliehoek2016concise}:
\begin{gather*}
    \mathcal{M} = \langle N, S, A, \Omega, O, R, P, \gamma \rangle,
\end{gather*}
which is composed of agent team $N$, state space $S$, action space $A$, observation space $\Omega$ with observation function $O: S \times N \to \Omega$, reward function $R:S \times A \to \mathbb R$ and transition function $P:S \times A\to \Delta(S)$ that maps pairs of state and action to the distributions of next states. $\gamma \in [0, 1)$ is the discount factor. The goal is to find a policy $\pi(\boldsymbol{a}|s)$ to maximize the discounted return $J(\pi)=\mathbb E_{P,\pi}[\sum_t \gamma^t R(s_{t},\boldsymbol{a}_{t})]$. For the optimization process, we use the implicit Q-Learning pipeline for its excellent learning efficiency in single agent~\cite{kostrikovoffline-IQL} and multi-agent setting~\cite{wang2023offline-omiga} as:
\begin{align}\label{eq:value-decomp}
\begin{split}
    Q^{tot}(\boldsymbol{o}, \boldsymbol{a}) &= \sum_i w^i(\boldsymbol{o}) Q^i(o^i, a^i)+b(\boldsymbol{o}), \\
    V^{tot}(\boldsymbol{o}) &= \sum_i w^i(\boldsymbol{o}) V^i(o^i) + b(\boldsymbol{o}), \\
    w^i &\ge 0, i=1,\cdots,n.
\end{split}
\end{align}
For MARL, the value functions are learned via the following behavior constrained objective:
\begin{align}\label{eq:q-loss}
     &\min_{\substack{Q^i, w^i, b \\ i=1,\cdots,n}} L_Q:=\\ 
    \nonumber&\mathbb E_{(\boldsymbol{o}, \boldsymbol{a}, \boldsymbol{o}')\sim D} \Big[(r(\boldsymbol{o,\boldsymbol{a}})+\gamma V^{tot} (\boldsymbol{o}') - Q^{tot} (\boldsymbol{o}, \boldsymbol{a}))^2 \Big],
\end{align}
\begin{align} \label{eq:v-loss}
    &\min_{V^i} L_V:=\mathbb E_{a_i \sim \mu_i} \bigg[ \frac{w^i(\boldsymbol{o})}{\alpha}V^i(o^i)  \\
    \nonumber& \qquad + \exp\left( \frac{w^i(\boldsymbol{o})}{\alpha}(Q^i(o^i,a^i) - V^i(o^i))\right) \bigg],
\end{align}
where $\alpha$ is a hyperparameter that controls the strength of behavior regularization.

In continual offline MARL setting, we should extract skills from datasets $D_1,\cdots,D_m,\cdots $ of an infinite stream of multi-agent tasks described by a series of Dec-POMDPs:
\begin{align*}
    \mathcal{M}_1, \mathcal{M}_2,\cdots,\mathcal{M}_m,\cdots
\end{align*}
 Furthermore, we adopt the task-bounded setting~\cite{pan2025survey}, where we assume that the skill $z^i_m$ of agent $i$ in task $m$ is a discrete variable from a finite set $\mathcal{Z}_m$, which encodes different types of individual policies $\pi^i_m(a^i|o^i,z^i_m)$. Skill set $\mathcal{Z}_m$ varies among tasks as new skills emerge and old skills become inefficient due to the shift of action distribution and state distribution, as shown in Figure \ref{fig:main_a}. Ideally, to handle the shift, the agent must construct an expandable skill library to flexibly learn new skills while covering and reusing all performant skills for all seen tasks. That is, to maximize the performance $J_m(\pi_m)$ on current task $m$ while maintaining the performance on old tasks above certain thresholds: $J_k(\pi_m) \ge L_k$, where $J_k(\pi_m)$ is the discounted return of $\pi_m$ on task $k$:
\begin{align*}
    J_k(\pi_m)=\mathbb E_{P_k,\pi_m} \left[\sum_{t=1}^\infty \gamma^t R_{k}(s_{t},\boldsymbol{a}_{t}) \right], k=1,\cdots,m. 
\end{align*}

\section{Method}
\begin{figure*}[t]
    \centering
    \includegraphics[width=1\linewidth]{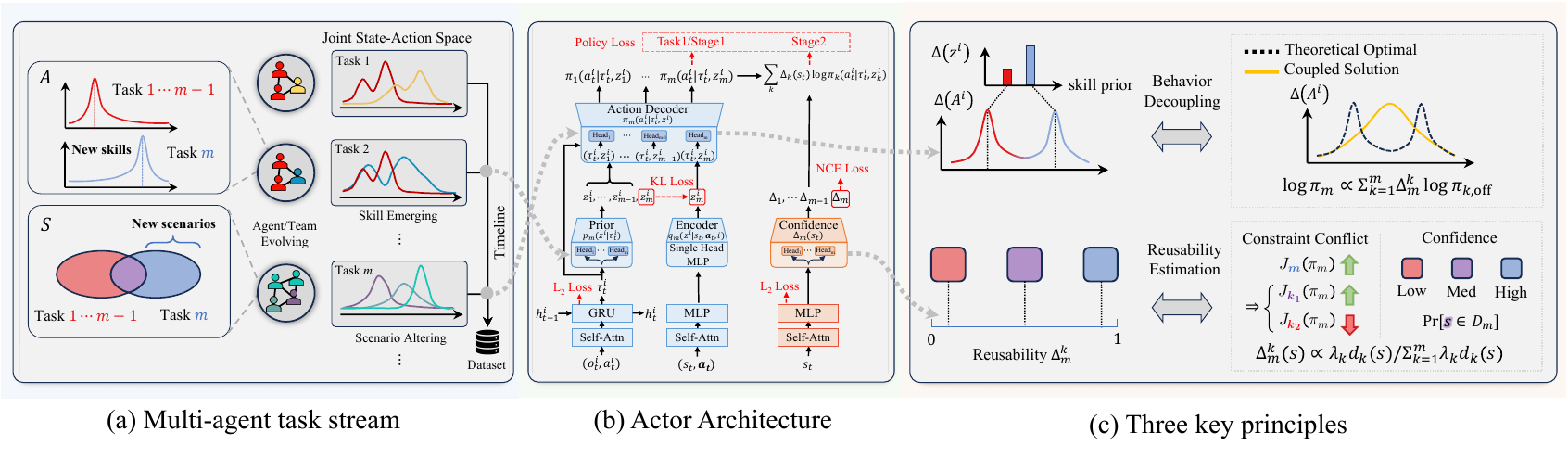}
    \caption{An illustration of our method COMAD. (a) Given a multi-agent task stream, the continual agents must discover skills from offline data of different tasks sequentially, where two types of distribution shift occur. (b) COMAD actor is equipped with multi-headed architectures to overcome the challenge of emerging skills by encoding and decoding skills from different scenarios, meanwhile estimating their reusability for efficient transfer. (c) The key principles of COMAD for continual skill discovery include decoupling behaviors from mixed datasets and selectively reuse them based on reusability.}
    \makeatletter
    \protected@edef\@currentlabel{\thefigure(a)}\label{fig:main_a}
    \protected@edef\@currentlabel{\thefigure(b)}\label{fig:main_b}
    \protected@edef\@currentlabel{\thefigure(c)}\label{fig:main_c}
    \makeatother
\end{figure*}

In this section, we describe our proposed method COMAD in detail. 
{\color{black}
The training cycle proceeds in two stages for each incoming task: the first stage discovers task-specific coordination skills from the offline dataset with an auto-encoder (Section~\ref{subsec:decouple}); and the second stage stores the extracted skills in multi-head models, then selectively reuses them via reusability estimation and a skill-augmented objective (Section~\ref{subsec:transfer}). We further build 
}
a continual learning pipeline that integrates the discover-then-transfer process into an efficient algorithm (Section~\ref{subsec:obj}). Finally, we analyze the effectiveness of this algorithm (Section~\ref{subsec:theory}).

\subsection{Discovering Skills from Offline Dataset}\label{subsec:decouple}
A major challenge of learning from multiple multi-agent tasks is that the agent team varies among different tasks, resulting in non-fixed input and output length. To construct a universal continual policy that can adapt to varied tasks, we {\color{black} adopt the widely used} population-invariant networks~\cite{hu2021updet, zhang2023discovering-odis} based on the transformer architecture~\cite{vaswani2017attention}:
\begin{align}\label{eq:pop-inv}
    \begin{split}
        &Q= \text{MLP}_q([o^{i,\text{env}},o^{i,1},\ldots,o^{i, n}]) \\
        &K=\text{MLP}_k([o^{i,\text{env}},o^{i,1},\ldots,o^{i, n}]) \\
        &V= \text{MLP}_v ([o^{i,\text{env}},o^{i,1},\ldots,o^{i, n}]) \\
        &[e^{i,\text{env}},e^{i,1},\ldots,e^{i, n}] = \text{softmax}\left( \frac{Q K^\top}{\sqrt{d_k}}\right)V,
    \end{split}
\end{align}
where $d_k=\text{dim}(K)$ is the dimension of $K$. Note that the input can also be states $s=[s^{\text{env}},s^{1},\ldots,s^{n}]$. Via this architecture, the varying-length inputs can be processed into equal-length representations $[e^{i,\text{env}},e^{i,1},\ldots,e^{i, n}]$ that can be used to construct value functions and policies.

To effectively discover skills from a mixed multi-agent dataset, we utilize an auto-encoder architecture~\cite{kingma2019introduction,zhang2023discovering-odis,liu2025learning-hissd} to learn skills in an end-to-end manner as shown in Figure \ref{fig:main_b}. Specifically, we leverage a global encoder $q_m(z^i_m | s,\boldsymbol{a},i)$ for each task $m$ to output a skill embedding $z^i_m$ for each agent $i$ based on the global coordination pattern expressed by the state-action pair. Then we decode the skill embeddings $z^i_m$ into actions via an action decoder $\pi_m^i(a^i|\tau^i,z^i_m)$, which is also the skill-conditioned policy. The global encoder $q_m$ and the action decoder $\pi^i_m$ process their inputs via the population invariant network in Equation \eqref{eq:pop-inv} and are trained via minimizing the policy loss:
\begin{align}\label{eq:actor-loss}
\begin{split}
    &L_{\pi,q}^{(1)} = -\mathbb E_{(\tau^i,a^i) \sim D_m,z^i_m \sim q_m}  \\
    &\bigg[\exp\left( \frac{w^i_m(\boldsymbol{o})}{\beta_m} A_m^i(o^i,a^i) \right) \cdot \log \pi_{m}^i (a^i | \tau^i, z^i_m) \bigg],
\end{split}
\end{align}
where $A^i_m(o^i,a^i)=Q^i_m(o^i,a^i) - V^i_m(o^i)$ is the individual advantage function and $\beta_m$ is a hyperparameter.

Moreover, since agents can only access local information during execution, we further introduce a local encoder $p_m(z^i_m|\tau^i)$ that utilizes only the historical trajectory of the individual agent $\tau^i$ to infer the coordination skill $z^i_m$. And we utilize the following KL loss to distill knowledge to help agents to infer $z^i_m$ with local information:
\begin{align}\label{eq:kl-loss}
    L_{p} = \mathbb E_{(s, \tau^i,\boldsymbol{a}) \sim D_m} \big[ 
    D_{KL}(q_m(\cdot|s,\boldsymbol{a},i) \Vert p_m(\cdot | \tau^i)) \big].
\end{align}

\subsection{Skill Partition and Reusability Estimation}\label{subsec:transfer}
As the continual tasks have multi-modal action distribution, designing only one skill encoder would suffer from modal collapse when transferring among tasks.
We opt for the multi-head architecture (MH) (Figure \ref{fig:main_b}) {\color{black} due to its simplicity and broad use~\cite{kessler2022same-owl, wolczyk2022disentangling,wolczyk2021continual-world}}:
\begin{align}\label{eq:multi-head}
    \begin{split}
        \pi_m^i(a^i|\tau^i;z^i_k) &= G_k^\pi (F(\tau^i,z^i_k)), \\
        p_m(z^i_k|\tau^i) &= G^p_k(F(\tau^i,z^i_k)).
    \end{split}
\end{align}
The feature extractor $F$ is shared among tasks and is composed of a transformer as in Equation \eqref{eq:pop-inv}, followed by a GRU~\cite{chung2014empirical} for modeling sequences, or MLP to represent single timesteps. The output heads $G_k^\pi$ and $G_k^p$ are implemented as MLPs, representing action decoders and the prior distribution of skills respectively, for tasks $k=1,\cdots,m$. Besides, forgetting may occur as the scenario shifts, thus we enforce the following $L_2$ loss on the feature extractor $F$ for task $m\ge 2$:
\begin{align}\label{eq:l2-loss}
    L_{F}=\lambda_{\text{reg}}\Vert \theta_{F,m} - \theta_{F,1} \Vert^2_2,
\end{align}
where $\lambda_{\text{reg}}$ is a hyperparameter determining the strength of this loss, $\theta_{F,1}$ is the parameter checkpoint of $F$ saved after training on task $1$ and $\theta_{F,m}$ is the parameter of $F$ when training on task $m$.

To reuse the skills acquired previously to guide the discovery of current task, we propose to augment the advantage function with the weighted sum of action logits as follows:
\begin{align}\label{eq:augmented-adv}
\begin{split}
    &\tilde A^i_m(s,\boldsymbol{o},a^i, \{\tilde z^i_k\}_{k=1}^m):= \\
    &\frac{w^i_m(\boldsymbol{o})}{\beta_m}A^i_m(o^i,a^i)+\sum_{k=1}^m \Delta_m^k(s)\log \pi_m^i(a^i|\tau^i, \tilde z^i_k),
\end{split}
\end{align}
where $\{\tilde z^i_k\}_{k=1}^m$ are sampled from the local encoder $p_m(z^i_k|\tau^i)$ for all $k=1,\cdots, m$, and $\Delta^k_m(s)=\frac{\beta_k d_k(s)}{\sum_{l=1}^m \beta_l d_l(s)}$ is the reusability score calculated via the empirical state distribution $d_k(s)$ estimated from the dataset of task $k$. Intuitively, $\Delta^k_m(s)$ represents the relative confidence of the agent on whether skill $\tilde z^i_k$ is reusable and the logits $\log \pi_m^i(a^i|\tau^i,\tilde z^i_k)$ in $\tilde A^i_m$ guide the advantage function based on the behavior of the skill $z^i_k$ when observing $\tau^i$. To estimate $d_k(s)$, we consider noise contrastive estimation (NCE)~\cite{gutmann2012noise} as an efficient approach. Specifically, let $s^-=s+\mathcal N(0,\sigma_{s} I)$ be the perturbed states and $p(s^-)$ be its distribution, where $\sigma_s > 0$ is a hyperparameter that controls the variance of the noise. We learn a multi-head state density estimator $E_{k} (s)=G^E_k(F^E(s))$ to approximate the ratio $ d_k(s)/p(s^-)$ via minimizing the NCE loss:
\begin{align}\label{eq:nce-loss}
    \begin{split}
        L_{NCE} 
        = &-\mathbb E_{s\sim D_k} \left[ \log\sigma (E_{\eta_k} (s)) \right] \\
        &-\mathbb E_{s^- \sim p} \left[ \log\sigma (-E_{\eta_k} (s^-)) \right].
    \end{split}
\end{align}
We also enforce an $L_2$ loss in the same form of Equation \eqref{eq:l2-loss} on the state feature extractor to prevent forgetting. The skill-augmented policy loss is:
\begin{align}\label{eq:cont-actor-loss}
    &L_{\pi,q}^{(2)} = -\mathbb E_{(s,\tau^i,a^i) \sim D_m,z^i_m \sim q_m, \{\tilde z^i_k\}_{k=1}^m \sim p_m} \\
    \nonumber &\left[\exp\left(\tilde A^i_m(s, \boldsymbol{o},a^i,\{\tilde z^i_k\}_{k=1}^m) \right) \cdot\log \pi_{m}^i (a^i | \tau^i, z^i_m) \right].
\end{align}

Note that a copy of $\pi^i_m$ is created to calculate $\log \pi^i_m$ in $\tilde A^i_m$, the gradient is stopped and will not flow back via $\tilde A^i_m$. We set $\sigma(x)=\frac{1}{1+e^{-x}}$ in $L_{NCE}$. When facing a new task $m$, we identify if some head $k\in \{1,\cdots,m\}$ can be reused based on the expected state density $\mathbb E_{s\sim D_m} [d_k(s)]$. If the expectation is above a prespecified threshold $d_0$, indicating a high probability that the states have been seen before, then head $k$ can be reused for task $m$. Otherwise, the confidence of current heads is low and expansion is needed. 

\begin{algorithm}[t]
\caption{COMAD Training}
\label{alg:comad_compact}
\begin{algorithmic}[1]
\STATE \textbf{Input:} sequential offline datasets $\mathcal{D}=\{\mathcal{D}_1,\cdots,\mathcal{D}_M\}$
\STATE \textbf{Initialize:} $F,F^E,Q,V,w,b$ and target networks
\FOR{task $m=1,\cdots,M,\cdots$}
    \STATE Select head $k^\ast$ with largest density score that exceeds $d_0$; otherwise allocate a new head.  
    \FOR{stage $j=1,2$}
        \FOR{batch $B\sim\mathcal{D}_m$}
            \STATE Update critics and mixer by Eq.~(\ref{eq:q-loss}),(\ref{eq:v-loss})
            \STATE Update local encoder and density estimator by Eq.~(\ref{eq:kl-loss}), (\ref{eq:nce-loss})
            \IF{$m=1$ or $j=1$}
                \STATE Set $L_{\pi,q}$ to the vanilla loss in Eq.~(\ref{eq:actor-loss})
            \ELSE
                \STATE Set $L_{\pi,q}$ to the augmented loss in Eq.~(\ref{eq:cont-actor-loss})
            \ENDIF
            \IF{$m\ge 2$}
            \STATE Apply $L_2$ regularization
            \ENDIF
            
            \STATE Update the actor network and the active heads using $L_{\pi,q}$ and $L_{NCE}$
        \ENDFOR
    \ENDFOR
    \IF{$m=1$}
        \STATE Save parameters for $L_2$ regularization
    \ENDIF
\ENDFOR
\end{algorithmic}
\end{algorithm}


\subsection{Overall Pipeline}\label{subsec:obj}
The learning process of COMAD follows the multi-agent task stream {\color{black} as shown in Algorithm~\ref{alg:comad_compact}}. For the first task, we allocate output heads $G_1^\pi, G_1^p, G_1^E$ for the action decoder, skill prior and the state density estimator respectively. Then we update the critics $Q,V$ and the mixer $w^i,b$ via the critic losses in Equation \eqref{eq:q-loss} and \eqref{eq:v-loss}, and update the state density estimator $E_1$ via the NCE loss in Equation \eqref{eq:nce-loss}. For the actor, we update the action decoder $\pi^i_1$ and the skill encoder $q_1$ via the vanilla policy loss in Equation \eqref{eq:actor-loss} and update the local encoder $p_1$ via the KL loss in Equation \eqref{eq:kl-loss}.

Starting from the second task $m\ge 2$, we first evaluate the expected state density $\mathbb E_{s\sim D_m} [d_k(s)]$ for all old tasks $k=1,\cdots,m-1$ and determine to expand new heads $G_m^\pi, G_m^p, G_m^E$ or reuse old heads $G_k^\pi, G_k^p, G_k^E$. The training process of the critic $Q,V$ and mixer $w^i,b$ remains the same, while the objective of state density estimator is added with the $L_2$ loss for the state feature extractor $F^E$. For the actor, the KL loss for the local encoder $p_m$ is the same, while the policy loss proceeds in two stages. In stage 1, the policy is trained via minimizing the vanilla policy loss in Equation \eqref{eq:actor-loss}. In stage 2, the policy is trained via minimizing the augmented policy loss in Equation \eqref{eq:cont-actor-loss} to effectively reuse old skills. Lastly, the $L_2$ loss of the feature extractor $F$ is added to the policy loss in both stages. 

During evaluation, we utilize a few samples to approximate $\mathbb E_{s\sim D_m} [d_k(s)]$ for all heads and select the most confident one for execution. The full training and evaluation pipeline is presented in Algorithm~\ref{alg:training} and \ref{alg:evaluation} in Appendix~\ref{sec:app:impl-details}.

\subsection{Theoretical Analysis}\label{subsec:theory} 
Recall that in Section \ref{sec:formulate} we have defined the skill set $\mathcal{Z}_m$ and we expect the policy $\pi^i_m(a^i|o^i,z^i_m)$ to maximize the performance on the current task and maintain the performance on previous tasks. We formalize this requirement as the following constrained optimization problem:
\begin{align}\label{eq:problem}
\begin{split}
    \max_{\pi_m\in \Pi(\mathcal{Z}_m)} & \quad  J_m(\pi_m)-\beta_m D_{KL}(\pi_m\Vert \mu_m) \\
    \text{s.t.} & \quad J_k(\pi_m) -\beta_k D_{KL}(\pi_m\Vert\mu_k) \\
    &\quad \ge J_k(\pi_k)-\beta_k D_{KL}(\pi_k\Vert \mu_k) - \delta_k, \\
    &\quad \forall k\in \{1,\cdots,m-1\},
\end{split}
\end{align}
where $\Pi(\mathcal{Z}_m)$ is the set of all policies corresponding to the skill set $\mathcal{Z}_m$, $J_k(\pi_m)$ is the discounted return of policy $\pi_m$ on task $k=1,\cdots,m$, $\beta_k$ is a hyperparameter that controls the strength of the KL divergence regularization $D_{KL}(\pi\Vert \mu)$, $\mu_k$ is the behavior policy corresponding to the dataset of task $k$ and $\delta_k>0$ is the performance degradation tolerance.

Due to convexity, we can explicitly solve the optimization problem and present the following result:
\begin{theorem}\label{thm:Thm1}
    The optimal solution for problem (\ref{eq:problem}) is
    \begin{align}\label{eq:main_thm}
    \begin{split}
        &\pi_m^i(a^i|\tau^i,z^i_m)\propto \exp\left(\sum_{k=1}^m \Delta_m^{k,*}(s)\log \tilde\pi^{i,*}_k(a^i|\tau^i,z^i_k)\right),
    \end{split}
    \end{align}
    where $\Delta^{k,*}_m(s)=\frac{\lambda_k\beta_k d_k(s)}{\sum_{l=1}^m \lambda_l\beta_l d_l(s)}$, $\tilde\pi^{i,*}_k(a^i|\tau^i,z^i_k)=\exp\left(\frac{w^{i,*}_k(\boldsymbol{o})}{\beta_k}A_k^{i,*}(o^i,a^i) + \log \mu_k^i(a^i|\tau^i,z^i_k)\right)$ is the single task optimal policy, $\lambda_m=1$, and $\lambda_k,k=1,\cdots,m-1$ are Lagrange multipliers satisfying
    \begin{align}\label{eq:slack}
        \nonumber & \lambda_k [(J_k(\pi_m) -\beta_k D_{KL}(\pi_m\Vert\mu_k)) \\
         & -(J_k(\pi_k)-\beta_k D_{KL}(\pi_k\Vert \mu_k) - \delta_k)] = 0.
    \end{align}
\end{theorem}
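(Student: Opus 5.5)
The plan is to treat problem (\ref{eq:problem}) as a concave maximization over occupancy measures and to solve it with Lagrangian duality. First, I rewrite each return $J_k(\pi)$ through the (state-)occupancy and the advantage. As in the standard derivation behind the behavior-regularized objectives (\ref{eq:q-loss})--(\ref{eq:v-loss}), I fix the state distribution of task $k$ to its empirical distribution $d_k(s)$. This is the usual offline/trust-region approximation also used implicitly by OMIGA and IQL. Under it, each regularized objective becomes
\begin{align*}
\mathbb E_{s\sim d_k}\Big[\mathbb E_{\boldsymbol a\sim\pi}[Q_k(s,\boldsymbol a)]-\beta_k D_{KL}(\pi(\cdot|s)\Vert\mu_k(\cdot|s))\Big]+\text{const}.
\end{align*}
This is concave in $\pi(\cdot|s)$, because it is linear plus a negative KL term. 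Every constraint of (\ref{eq:problem}) is therefore a superlevel set of a concave functional, and the feasible set is convex. The policy $\pi_m$ itself is feasible, so Slater's condition holds whenever $\delta_k>0$, and strong duality follows.

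Next, I form the Lagrangian with multipliers $\lambda_k\ge0$ for $k<m$ and set $\lambda_m=1$ for the objective:
\begin{align*}
\mathcal L(\pi,\lambda)=\sum_{k=1}^m\lambda_k\,\mathbb E_{s\sim d_k}\Big[\mathbb E_{\pi}[Q_k]-\beta_k D_{KL}(\pi\Vert\mu_k)\Big]+\text{const}.
\end{align*}
Grouping the terms pointwise in $s$ turns the maximization into independent problems, one per state. At each $s$ I maximize $\sum_k\lambda_k d_k(s)\big(\langle\pi,Q_k\rangle-\beta_k D_{KL}(\pi\Vert\mu_k)\big)$ over the simplex. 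The first-order condition, with a normalizing multiplier, gives
\begin{align*}
\Big(\sum_l\lambda_l\beta_l d_l(s)\Big)\log\pi=\sum_k\lambda_k d_k(s)\big(Q_k+\beta_k\log\mu_k\big)+c(s).
\end{align*}
Dividing by $\sum_l\lambda_l\beta_l d_l(s)$ yields
\begin{align*}
\log\pi=\sum_k\Delta^{k,*}_m(s)\Big(\tfrac{1}{\beta_k}Q_k+\log\mu_k\Big)+c'(s).
\end{align*}
Replacing $Q_k$ by the advantage $A_k$ only shifts terms by $V_k(s)$, which is absorbed into the normalizer. This gives the geometric mixture of the single-task optimal policies $\tilde\pi^{*}_k\propto\mu_k\exp(A_k/\beta_k)$.

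The third step passes to the agent-level form. I use the value decomposition (\ref{eq:value-decomp}) to write $Q^{tot}_k=\sum_i w^i_k Q^i_k+b_k$ and assume that $\mu_k$ factorizes across agents given the skills. The exponent then splits into a sum over agents, so the joint policy factorizes into per-agent terms $\exp\big(\frac{w^{i,*}_k}{\beta_k}A^{i,*}_k+\log\mu^i_k(a^i|\tau^i,z^i_k)\big)$. Conditioning on the skills, with $\tilde\pi^{i,*}_k$ taken as the skill-conditioned single-task optimum, gives exactly (\ref{eq:main_thm}). Finally, the KKT conditions at the saddle point give dual feasibility $\lambda_k\ge0$ and complementary slackness, which is (\ref{eq:slack}).

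The main obstacle is the first step. The true $J_k(\pi)$ depends on $\pi$ through the occupancy, so it is not concave in $\pi$. The pointwise decomposition, and with it the clean weights $\Delta^{k,*}_m(s)$ built from $d_k(s)$, is valid only if the state distributions are frozen at the dataset distributions. I would state this explicitly as the standard offline surrogate, namely the first-order policy-improvement objective that also underlies (\ref{eq:v-loss}). An alternative that keeps concavity would be to work over occupancy measures with constraints that depend on $\pi$, but that route would not produce the stated form.
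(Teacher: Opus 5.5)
Your proposal follows the paper's proof essentially step for step. Freezing the state distribution of task $k$ at $d_k$ is exactly the paper's DICE-style replacement of $d^{\pi_m^{tot}}_k$ by the empirical $d_k$; after that, both arguments write the KKT stationarity condition with $\lambda_m=1$ and a normalization multiplier, divide by $\sum_l \lambda_l\beta_l d_l$, absorb $V_k$ into the normalizer, split the exponent across agents via the value and behavior decompositions, and read off complementary slackness.

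The one step to repair is your Slater argument, which the paper never attempts; it simply posits the multipliers. Pointing to $\pi_m$ does not give a single policy with strict slack in all $m-1$ constraints. Such a policy does exist by induction, taking $\pi_{m-1}$ to be the step-$(m-1)$ solution: if $\hat\pi$ is strictly feasible for the first $m-2$ constraints, then for small $t>0$ the mixture $(1-t)\pi_{m-1}+t\hat\pi$ is strictly feasible for all $m-1$ constraints. This follows from concavity of the frozen surrogate and the slack $\delta_{m-1}$ of $\pi_{m-1}$ in constraint $m-1$, given finite KL terms.
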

This result indicates that the augmented advantage function in Equation \eqref{eq:augmented-adv} guides the policy to approximate the optimal continual policy in Equation \eqref{eq:main_thm}, while the reusability score $\Delta^k_m(s)$ approximates $\Delta^{k,*}_m(s)$ that act as a skill gating mechanism. 
{\color{black} 
Concretely, it provides us the following insights for the architectural design:
\begin{itemize}
    \item The Lagrange multipliers $\lambda_1,\cdots, \lambda_m$ act as a gating mechanism for selecting useful skills: they quantify the importance of corresponding task performance constraints and filter out useless skills by recognizing constraint conflicts.

    \item The reusability scores $\Delta^{k,*}_m(s)$ integrate the multipliers and state densities as fine-grained reusability information: skills should be reused on tasks that are similar (sharing coordination patterns) and on states that are familiar (more confident).

    \item The optimal solution in Equation \eqref{eq:main_thm} presents a unified perspective for comparing and reusing skills across heterogeneous tasks through the geometric mixture of action distributions, which can be generalized to \textit{Bregman Divergences}.
\end{itemize}
}
We defer the proof and further discussion to Appendix \ref{sec:app:theory}.


\section{Experiments}\label{sec:exp}
We evaluate COMAD to answer the following research questions: (1) How will skill partition benefit knowledge reuse? (Section \ref{subsec:exp-case}) (2) What is the continual learning ability of COMAD in different scenarios? (Section \ref{subsec:exp-main}) (3) How are skills transferred across tasks? (Section \ref{subsec:exp-skill}) (4) How do different components of COMAD contribute to its performance? (Section \ref{subsec:exp-ablation})

\subsection{Environments, Baselines and Metrics}
We evaluate on five diverse environments: (1) Level-based Foraging (LBF)~\cite{papoudakis12021benchmarking-lbf} is a grid world environment where agents are rewarded to cooperatively collect foods. The positions of foods change among tasks. 
(2) Cooperative Navigation (CN)~\cite{lowe2017multi-coop-navi} is an environment from the multi-agent particle environments~\cite{mordatch2018emergence}, where agents cooperate to reach landmarks and the number of agents varies among tasks. 
(3) StarCraft Multiagent Challenge (SMAC)~\cite{samvelyan19smac} focuses on discrete micromanagement with heterogeneous units, based on which we designed two representative scenarios: Marines and Stalker-Zealot.
(4) SMACv2~\cite{ellis2023smacv2} further introduces stochasticity based on SMAC to provide a more challenging benchmark, based on which we design three scenarios: Protoss, Zerg and Terran, based on the races of controllable units.
(5) Multiagent MuJoCo~\cite{peng2021facmac} is a benchmark for continuous multi-agent robotic control. We design two types of scenarios that change in Reward functions or robot Dynamics.

We compare against three categories of methods adapted for offline multi-agent continual cooperation.
(1) Offline skill discovery methods: ODIS~\cite{zhang2023discovering-odis} that extracts skills with a VAE and HiSSD~\cite{liu2025learning-hissd} that decouples common and task-specific skills via hierarchical VAEs and contrastive learning. Both of them rely on fixed-size skill libraries.
(2) Continual learning methods: EWC~\cite{kirkpatrick2017overcoming-ewc} prevents forgetting via parameter regularization based on Fisher information, while OWL~\cite{kessler2022same-owl} mitigates interference through parameter isolation using a multi-head architecture. 
(3) Vanilla methods {\color{black}with single-head architecture}: multitask (MT) that trains jointly on all data (as soft upper bound); finetune (FT) that learns each task sequentially (as soft lower bound) and from scratch (FS) that re-initializes for each task (plasticity reference). {\color{black} For all environments and methods, we use a \textbf{task-bounded} protocol at both continual training and evaluation, where task identities are only used for output head routing and parameter saving.}

Following~\cite{wolczyk2021continual-world}, we report three metrics based on the performance $p_k(t) \in [0,1]$ (win rate in SMAC and SMACv2, or normalized return otherwise) of task $k$ at training step $t$. Each task stream contains $M$ tasks and each task $k$ is trained for $\Delta$ steps during $t \in [(k-1)\Delta,k\Delta]$. The total budget for a task stream is $T=M\cdot \Delta$. Then the three metrics are calculated as follows: 
(1) Average Performance: $\text{P}(t):=\frac{1}{M} \sum_{k=1}^M p_k(t)$, which measures the overall continual learning ability. We report the final performance $\text{P}=\text{P}(T)$ and performance over time $\text{P}(t)$ respectively.
(2) Backward Transfer: $\text{BwT}_k = p_k(T) - p_k(k\Delta)$, which measures the change of performance between the end of one task and the end of task stream. We report the average backward transfer as $\text{BwT}=\frac{1}{M}\sum_{k=1}^M \text{BwT}_k$. 
(3) Forward Transfer: $\text{FwT}_k = \frac{1}{\Delta} \sum_{t=(k-1)\Delta}^{k\Delta} (p_k(t) - p^b_k(t))$, which measures the fast learning ability of algorithms via the normalized area under their training curves compared to a baseline. FS is chosen as the baseline $p^b_k(t)$ for each task $k$, and we report the average forward transfer as $\text{FwT}=\frac{1}{M}\sum_{k=1}^M \text{FwT}_k$.
We defer further details for environments, baselines and metrics to Appendix \ref{sec:app:exp-details}.

\begin{figure}[t]
    \centering
    \includegraphics[width=1\linewidth]{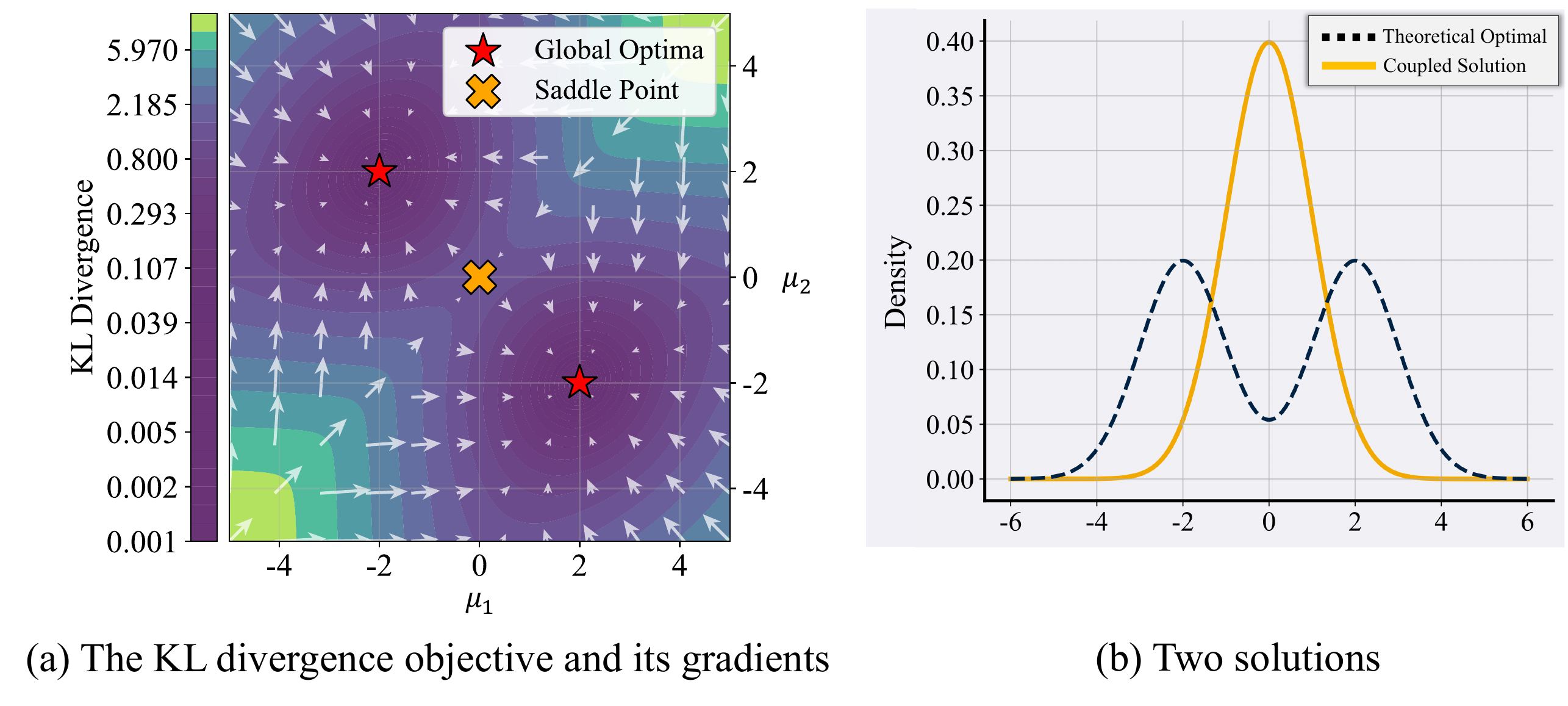}
    \vspace{-12pt}
    \caption{A toy example of the coupled solution problem. (a) The loss landscape of the objective and its gradients, highlighting two distinct global optima and a saddle point in between. (b) The corresponding density functions of the Coupled Solution that collapses to the saddle point, and the Theoretical Optimal Solution.}
    \label{fig:toy_kl}
    \vspace{-12pt}
    \makeatletter
    \protected@edef\@currentlabel{\thefigure(a)}\label{fig:toy_kl_a}
    \protected@edef\@currentlabel{\thefigure(b)}\label{fig:toy_kl_b}
    \makeatother
\end{figure}

\begin{table*}[ht]
  \centering
  \caption{Average performance $\pm$ std of different algorithms on task streams from LBF, CN, SMAC, SMACv2 and MAMuJoCo environments. The \textbf{best} and \underline{second best} algorithms are marked and algorithms whose average performances lie within 1 std of the best algorithms are marked with asterisks (*). All results are based on 5 distinct seeds and 32 episodes per seed on each evaluation step. "Overall" reports the average performance of each algorithm over all task streams. }
  \resizebox{\linewidth}{!}{
    \begin{tabular}{cc|c|ccc|ccc|cc}
    \toprule
    \multicolumn{1}{c|}{Task Stream} & Dataset & COMAD(ours) & MT    & FS    & FT    & EWC   & OWL   & Rehearsal & ODIS-FT & HiSSD-FT \\
    \midrule
    \multicolumn{1}{c|}{\multirow{2}[2]{*}{LBF}} & Expert & \textbf{99.26 $\pm$ 0.08} & 79.26 $\pm$ 0.14 & 20.00 $\pm$ 0.00 & 7.83 $\pm$ 6.72 & 13.30 $\pm$ 0.53 & \underline{98.65 $\pm$ 0.35} & 97.07 $\pm$ 0.66 & 16.48 $\pm$ 2.86 & 19.17 $\pm$ 1.70 \\
    \multicolumn{1}{c|}{} & Medium & \underline{65.08 $\pm$ 1.34} & 48.49 $\pm$ 1.72 & 16.14 $\pm$ 0.71 & 7.86 $\pm$ 4.00 & 7.45 $\pm$ 3.26 & 37.83 $\pm$ 13.78 & \textbf{66.79 $\pm$ 1.15} & 7.06 $\pm$ 0.71 & 7.37 $\pm$ 1.26 \\
    \midrule
    \multicolumn{1}{c|}{\multirow{2}[2]{*}{CN}} & Expert & \textbf{78.49 $\pm$ 0.41} & 69.60 $\pm$ 2.06 & 69.35 $\pm$ 0.93 & 69.10 $\pm$ 1.84 & 67.18 $\pm$ 3.71 & 64.59 $\pm$ 1.74 & \underline{72.60 $\pm$ 1.01} & 67.26 $\pm$ 0.75 & 63.23 $\pm$ 2.16 \\
    \multicolumn{1}{c|}{} & Medium & 47.05 $\pm$ 2.03 & 32.33 $\pm$ 3.38 & \underline{59.51 $\pm$ 0.60} & \textbf{60.67 $\pm$ 1.04} & 55.18 $\pm$ 1.43 & 33.33 $\pm$ 2.21 & 39.81 $\pm$ 1.35 & 10.48 $\pm$ 1.25 & 12.06 $\pm$ 2.01 \\
    \midrule
    \multicolumn{1}{c|}{\multirow{2}[2]{*}{Marines}} & Expert & \textbf{94.05 $\pm$ 0.83} & 48.63 $\pm$ 0.49 & 53.84 $\pm$ 4.81 & 53.66 $\pm$ 4.91 & 58.84 $\pm$ 5.08 & \underline{92.52 $\pm$ 0.97} & 42.45 $\pm$ 28.24 & 17.83 $\pm$ 0.70 & 14.51 $\pm$ 0.20 \\
    \multicolumn{1}{c|}{} & Medium & \textbf{55.54 $\pm$ 0.66} & 34.63 $\pm$ 1.46 & \underline{51.50 $\pm$ 3.37} & 18.15 $\pm$ 17.84 & 36.91 $\pm$ 3.48 & 48.19 $\pm$ 2.41 & 1.10 $\pm$ 0.69 & 31.79 $\pm$ 4.23 & 35.06 $\pm$ 1.61 \\
    \midrule
    \multicolumn{1}{c|}{\multirow{2}[2]{*}{Stalker-Zealot}} & Expert & \underline{72.20 $\pm$ 0.77} & 42.66 $\pm$ 0.99 & 9.90 $\pm$ 0.46 & 10.38 $\pm$ 3.72 & 14.84 $\pm$ 1.42 & \textbf{77.49 $\pm$ 0.29} & 71.40 $\pm$ 0.99 & 17.83 $\pm$ 1.07 & 27.61 $\pm$ 0.82 \\
    \multicolumn{1}{c|}{} & Medium & 45.20 $\pm$ 1.89 & \textbf{52.96 $\pm$ 1.08} & 10.74 $\pm$ 1.37 & 9.75 $\pm$ 2.00 & 16.65 $\pm$ 7.67 & \underline{52.54 $\pm$ 3.25}* & 23.45 $\pm$ 22.37 & 35.13 $\pm$ 0.97 & 31.07 $\pm$ 3.45 \\
    \midrule
    \multicolumn{1}{c|}{Protoss} & Medium & \textbf{59.77 $\pm$ 1.76} & 47.40 $\pm$ 0.73 & 24.25 $\pm$ 0.82 & 21.00 $\pm$ 1.14 & 23.56 $\pm$ 1.17 & \underline{57.05 $\pm$ 0.28} & 44.90 $\pm$ 1.27 & 8.39 $\pm$ 0.99 & 18.04 $\pm$ 2.71 \\
    \multicolumn{1}{c|}{Zerg} & Medium & \textbf{59.98 $\pm$ 2.31} & 11.97 $\pm$ 2.20 & 34.20 $\pm$ 1.49 & 35.92 $\pm$ 0.22 & 35.70 $\pm$ 1.32 & \underline{56.89 $\pm$ 1.45} & 22.10 $\pm$ 21.18 & 4.22 $\pm$ 2.99 & 9.77 $\pm$ 9.10 \\
    \multicolumn{1}{c|}{Terran} & Medium & \textbf{57.35 $\pm$ 1.60} & 44.74 $\pm$ 1.03 & 37.54 $\pm$ 1.20 & 35.37 $\pm$ 2.13 & 38.82 $\pm$ 1.24 & \underline{55.09 $\pm$ 1.34} & 24.21 $\pm$ 7.78 & 10.37 $\pm$ 8.11 & 4.66 $\pm$ 1.74 \\
    \midrule
    \multicolumn{1}{c|}{\multirow{2}[2]{*}{Reward}} & Expert & \textbf{77.75 $\pm$ 4.64} & 63.73 $\pm$ 5.93 & 10.18 $\pm$ 1.30 & 5.09 $\pm$ 1.06 & 15.45 $\pm$ 1.56 & \underline{75.46 $\pm$ 5.08}* & 5.33 $\pm$ 3.73 & 8.20 $\pm$ 1.76 & 2.55 $\pm$ 0.47 \\
    \multicolumn{1}{c|}{} & Medium & \textbf{62.16 $\pm$ 3.22} & 30.24 $\pm$ 5.35 & 11.55 $\pm$ 2.06 & 9.01 $\pm$ 0.22 & 10.58 $\pm$ 1.00 & \underline{58.15 $\pm$ 3.10} & 11.53 $\pm$ 2.13 & 11.23 $\pm$ 0.64 & 11.95 $\pm$ 1.17 \\
    \midrule
    \multicolumn{1}{c|}{\multirow{2}[2]{*}{Dynamics}} & Expert & 46.66 $\pm$ 0.96 & \textbf{55.28 $\pm$ 4.27} & 23.24 $\pm$ 1.53 & 19.48 $\pm$ 0.53 & 15.65 $\pm$ 1.23 & \underline{46.94 $\pm$ 4.31} & 14.77 $\pm$ 1.37 & 14.56 $\pm$ 1.52 & 16.86 $\pm$ 1.30 \\
    \multicolumn{1}{c|}{} & Medium & \underline{44.54 $\pm$ 3.73} & \textbf{53.14 $\pm$ 5.53} & 19.52 $\pm$ 1.05 & 15.72 $\pm$ 0.23 & 16.54 $\pm$ 0.63 & 42.87 $\pm$ 2.45 & 16.39 $\pm$ 3.78 & 16.03 $\pm$ 1.81 & 16.70 $\pm$ 1.27 \\
    \midrule
    \multicolumn{2}{c|}{Overall} & \textbf{64.34} & 47.67 & 30.10 & 25.26 & 28.44 & \underline{59.84} & 36.93 & 18.46 & 19.37 \\
    \bottomrule
    \end{tabular}%
    }
  \label{tab:main-res}%
\end{table*}%

\begin{figure*}[ht]
    \vspace{-8pt}
    \centering
    \includegraphics[width=\linewidth]{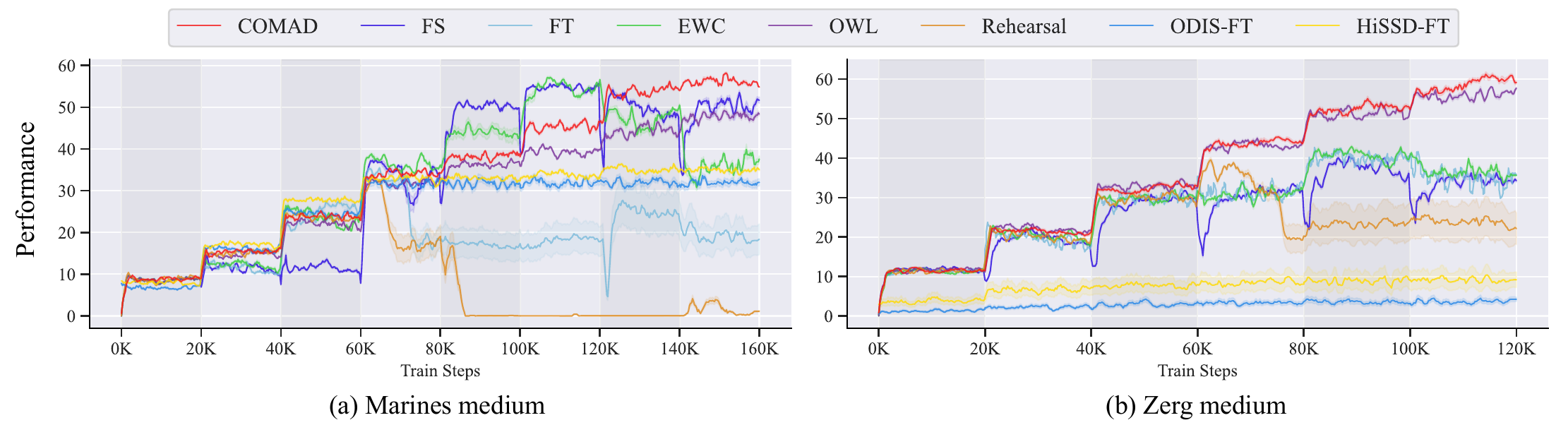}
    \vspace{-16pt}
    \caption{Representative learning curves of all method except MT, where the shaded areas correspond to $0.2\times \text{std}$ for visual clarity. The training timeline is segmented into 8 (left) or 6 (right) 20K-step intervals, which represent the training phases of tasks in corresponding task streams. The value at each point represents the average performance evaluated across the current task and all encountered tasks.}
    \label{fig:curve_main}
    \makeatletter
    \protected@edef\@currentlabel{\thefigure(a)}\label{fig:curve_main_a}
    \protected@edef\@currentlabel{\thefigure(b)}\label{fig:curve_main_b}
    \makeatother
    \vspace{-8pt}
\end{figure*}

\subsection{An Illustrative Example}\label{subsec:exp-case}
To show the necessity of the multi-head architecture to handle the multi-modal action data, we conduct a simple numerical experiment based on the following setting:
\begin{align}
\begin{split}
    p_{\text{data}}&\propto\mathcal{N}(-2, 1) + \mathcal{N}(2,1) \\
    p_{\text{model}} &\propto \mathcal{N}(\mu_1, 1) + \mathcal{N}(\mu_2,1) \\
    d(\mu_1,\mu_2) &= D_{KL}(p_{\text{data}} \Vert p_{\text{model}}),
\end{split}
\end{align} 
where $p_{\text{data}}$ is the data distribution with two Gaussian modalities, $p_{\text{model}}$ is a model with parameters $\mu_1,\mu_2$, and $d(\mu_1,\mu_2)$ is the optimization objective and is visualized in Figure \ref{fig:toy_kl_a} together with its gradients. We observe that: 1) When setting only one degree of freedom, i.e. setting $\mu_1=\mu_2$, the model converges to the saddle point on the diagonal that corresponds to the coupled solution in Figure \ref{fig:toy_kl_b}, presenting a geometric mixture of the two modalities in $p_{\text{data}}$; 
2) When $\mu_1,\mu_2$ are free, the model can easily converge to the optimal solutions at $(-2,2)$ or $(2, -2)$ due to convexity, corresponding to the optimal solution shown in Figure \ref{fig:toy_kl_b}.

The observations suggest that a skill-conditioned policy with sufficient representation ability is the key to partitioning different skills from sequential multi-agent datasets. Without such explicit partitioning, the policy is forced to compress the disparate behavioral modes into a single unimodal representation, leading to a failure in distinguishing between heterogeneous coordination patterns. This outcome reflects an interfered policy that conflates conflicting gradients, exactly corresponding to the case where skills are not gated by the coefficient $\Delta^{k,*}_m(s)$ in Theorem \ref{thm:Thm1}.

\subsection{Main Results}\label{subsec:exp-main}

To evaluate the continual learning capabilities of COMAD, we report the average performance ($P \times 100$) across 9 task streams in Table \ref{tab:main-res}, with representative learning curves shown in Figure \ref{fig:curve_main}. COMAD achieves superior overall performance (64.34), significantly outperforming all baselines across diverse scenarios.

The LBF environment, which requires agents to \textbf{memorize} distinct goals (food locations), highlights the stability-plasticity dilemma. Firstly, skill discovery methods (ODIS, HiSSD) fail to learn feasible policies due to a lack of plasticity resulting from their fixed-size skill libraries. Meanwhile, monolithic approaches (MT, FS, FT, EWC) suffer from severe catastrophic forgetting, {\color{black}which suggests possible task conflict}. In contrast, methods with explicit task separation (OWL) or replay (Rehearsal) succeed by mitigating {\color{black}task conflict}. Importantly, COMAD effectively expands its skill library to memorize diverse goals, achieving the best performance on the expert datasets and remains competitive with the best baseline on medium datasets. 
In addition, the high similarity among CN tasks emphasizes the potential for positive transfer. Consequently, simple baselines like FS, FT and EWC perform well, particularly on medium datasets where fine-tuning is sufficient. However, skill discovery baselines overfit to early, simple tasks and struggle to scale, while MT and Rehearsal are relatively limited by {\color{black}potential task conflicts and training instability}. Instead, COMAD leverages this similarity through its reuse mechanism, significantly outperforming all methods on expert datasets by effectively transferring coordination patterns across varying team sizes.

Furthermore, SMAC tasks present a mixed challenge requiring both cross-task transfer and separation, as the task streams are divided into two parts: symmetric tasks (such as 4m and 2s3z) and asymmetric tasks (such as 5m\_vs\_6m and 2s2z\_vs\_4z). Consequently, skill discovery baselines (ODIS, HiSSD) suffer from plasticity loss and fail to learn the asymmetric tasks as shown in Figure \ref{fig:curve_main_a}. Conventional baselines (FT, FS, EWC) also struggle with this task heterogeneity, often failing to yield feasible policies. Although OWL mitigates {\color{black}task conflicts} through isolation, it lacks the transfer capability to exploit shared structures. COMAD instead excels by partitioning heterogeneous behaviors while actively reusing skills, achieving near-optimal performance. This advantage is amplified in SMACv2, where stochasticity and large team scales (up to 20 agents) exacerbate the problem of the evolving skill space. As shown in Figure \ref{fig:curve_main_b}, COMAD significantly outperforms baselines that suffer from plasticity loss (ODIS, HiSSD) or {\color{black}potential} interference (Rehearsal).

Lastly, Reward tasks in MAMuJoCo further reinforce the importance of skill memorization presented in LBF. Results show that COMAD and OWL dominate while Rehearsal and other baselines struggle due to the complexity of the continuous space. Similarly, in Dynamics tasks, extreme physical changes render most continual methods sub-optimal compared to MT. Nevertheless, COMAD remains the most competitive continual approach, demonstrating its robustness even under severe dynamic shifts. We defer complete results and further discussion to Appendix \ref{sec:app:addtional-results}.

\begin{figure}[t]
    \centering
    \includegraphics[width=1\linewidth]{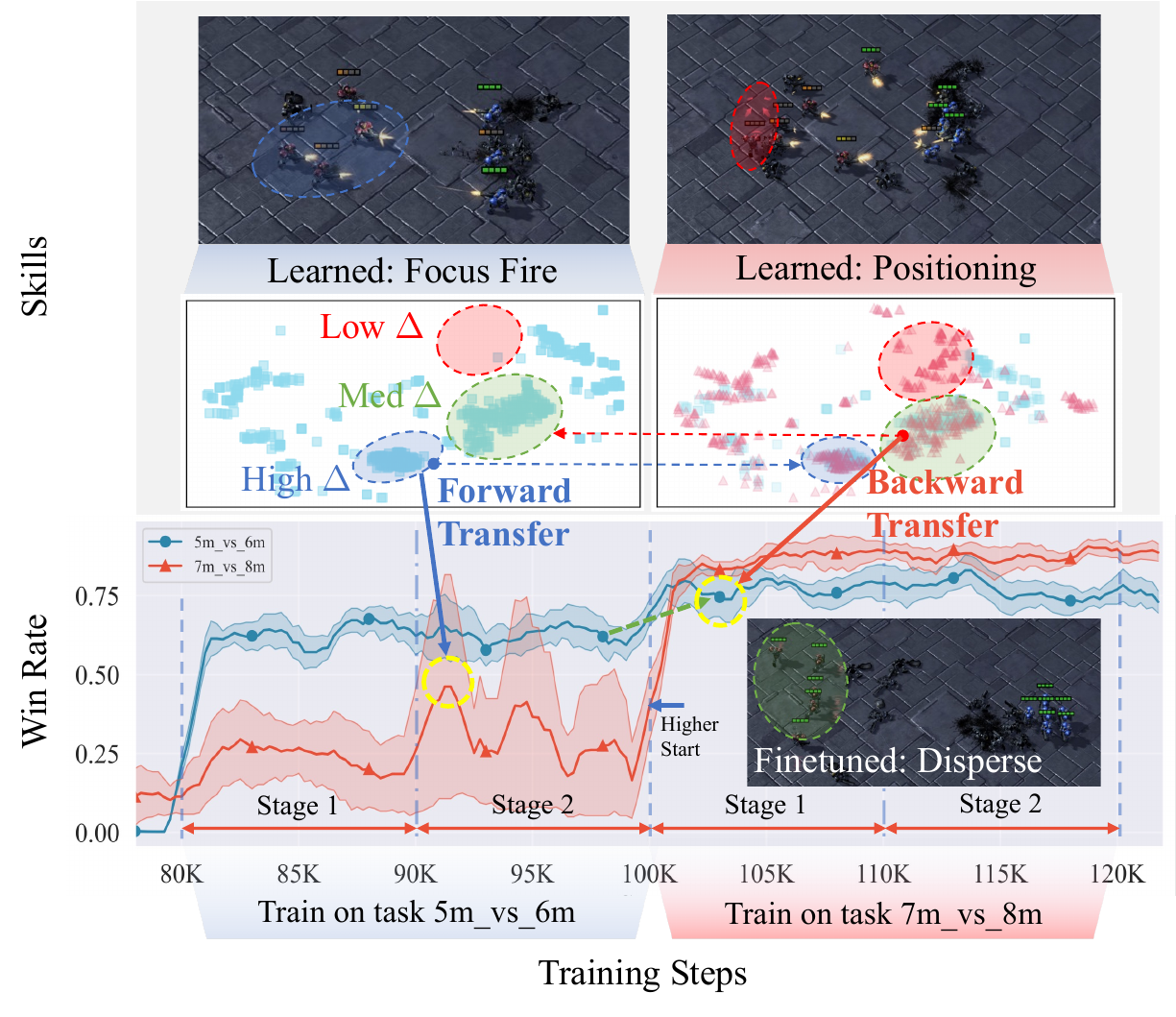}
    \caption{A slice of the learning curve and skill visualization on 5m\_vs\_6m and 7m\_vs\_8m from the Marines-expert task stream. The lower row presents the win rates across different training stages for each task, {\color{black}the middle row visualizes the evolution of skill embeddings and their relative reusability, and the upper row shows the outcomes of the learned skills.}}
    \label{fig:transfer-analysis}
\end{figure}

\subsection{Skill Transfer Analysis}\label{subsec:exp-skill}
To understand the skill transfer process in COMAD, we analyze a representative slice of the learning curve on the Marines-expert task, as shown in Figure \ref{fig:transfer-analysis}. Concretely, we zoom in on the $80K\sim 120K$ training steps, which covers the learning of 5m\_vs\_6m and 7m\_vs\_8m.

During stage 1 of 5m\_vs\_6m, the agent acquires the Focus Fire skill to coordinate units to target specific enemies, achieving a win rate about 0.6. Meanwhile, the output head of 5m\_vs\_6m is reused with a high confidence score for the evaluation of 7m\_vs\_8m, which improves the win rate to near 0.3 on task 7m\_vs\_8m, showcasing zero-shot generalization. Furthermore, during stage 2, the skill-augmented objective effectively integrates skills acquired from earlier tasks and improves the win rate to exceed 0.4, providing a higher starting point and a base for faster learning on task 7m\_vs\_8m, evidencing the existence of forward transfer.

Additionally, when learning 7m\_vs\_8m, the win rate of 5m\_vs\_6m improves from 0.6 to roughly 0.8. Skill visualization reveals the mechanism behind such positive backward transfer: while learning the novel Positioning skill for 7m\_vs\_8m, the agent also finetunes its Disperse skill for 5m\_vs\_6m through the shared feature extractor. The remarkable phenomenon showcases the backward transfer ability of COMAD through skill partitioning.
\begin{figure}[t]
    \centering
    \includegraphics[width=1\linewidth]{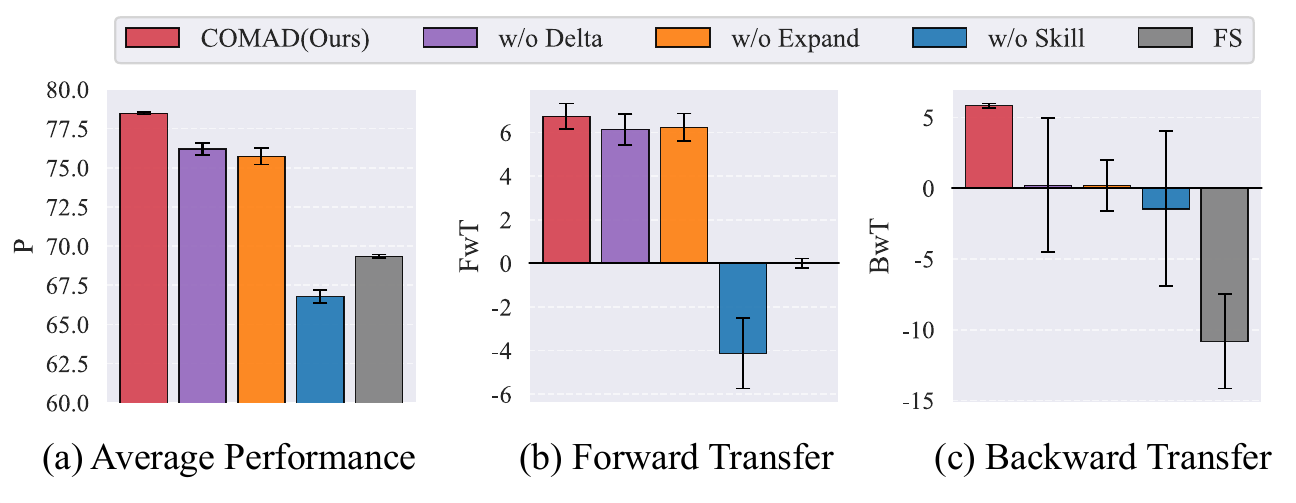}
    \vspace{-12pt}
    \caption{Metrics of ablation studies on CN-expert task.}
    \vspace{-12pt}
    \label{fig:ablation-cn}
\end{figure}
\subsection{Ablation Study}\label{subsec:exp-ablation}
To dissect the contributions of each component within COMAD, we conduct ablation studies on the CN-expert task, as depicted in Figure \ref{fig:ablation-cn}. Firstly, removing the skill encoders (w/o Skill) affects both learning and bidirectional transfer the most, emphasizing the necessity of partitioning skills from data. Besides, disabling skill library expansion (w/o Expand) and uniform reusability estimation (w/o Delta) also downgrade the performance and transfer, especially the backward transfer. Furthermore, ablation studies on Marines-expert task show that partitioning skills with MH is essential for learning and transfer, while reusability estimation has major influence on forward transfer (in Appendix \ref{subsec:app:ablation-smac}). We also analyze the sensitivity of the noise scale $\sigma_s$ and the confidence threshold $d_0$, showcasing that moderate choices of $\sigma_s$ and $d_0$ maximize performance and transfer. Additional results are detailed in Appendix \ref{subsec:app:sens}.

\section{Final Remarks}
In this work, we propose a principled framework for \textbf{C}ontinual \textbf{O}ffline \textbf{M}ulti-\textbf{a}gent Skill \textbf{D}iscovery via Skill Partition and Reuse (COMAD), to handle the exponential growth of coordination skill space in offline multi-agent task streams. COMAD discovers skills with a VAE-based skill encoder, actively expands its skill library and reuses them based on reusability estimation and a skill-augmented objective, thereby effectively promoting skill learning and transfer. Theoretical results indicate that COMAD approximates the optimal continual policy, and empirical results demonstrate its effectiveness across diverse environments. {\color{black} Overall, COMAD contributes a new multi-agent continual learning formulation, a reuse mechanism via skill guidance and corresponding theoretical analysis compared to previous works.}
However, relying on state density to approximate the skill gating mechanism may be insufficient for complex real-world scenarios, where skills should be reused based on fine-grained semantic reusability. Future work could explore improving this gating mechanism by integrating Large Language Models (LLMs) or human feedback.

\section*{Acknowledgements}
This work was supported by the National Natural Science Foundation of China under Grants 62506159, U24A20324, the Natural Science Foundation of Jiangsu under Grants BK20241199, BK20243039, the “111 Center” (No. B26023), and Fundamental and Interdisciplinary Disciplines Breakthrough Plan of the Ministry of Education of China (No. JYB2025XDXM118). We thank the anonymous reviewers for their support and helpful discussions on improving the paper.

\section*{Impact Statement}
The work presented in this paper aims to enhance the adaptability and sample efficiency of offline multi-agent reinforcement learning methods in continual task stream settings. The proposed framework provides an efficient approach for future research on more sustainable and versatile autonomous multi-agent systems. We declare that this work does not involve any ethical issues, and thus no special discussion on ethical issues is required.


\bibliography{example_paper}
\bibliographystyle{icml2026}

\newpage
\appendix
\onecolumn

\section{Theoretical Results}\label{sec:app:theory}
\subsection{Proof for Theorem \ref{thm:Thm1}}
We first introduce the necessary assumption and lemmas regarding the multi-agent policy factorization, and then derive the optimal continual learning policy. To bridge the gap between the joint policy optimization and independent agent execution, we adopt the skill-based factorization assumption:

\begin{assumption}[Behavior Factorization]
\label{asp:factorization}
The joint behavior policy $\mu^{tot}_m(\boldsymbol{a}|\boldsymbol{o})$ of task $m$ is skill-conditioned, i.e. in the form of $\mu^{tot}_m(\boldsymbol{a}|\boldsymbol{o}, \boldsymbol{z}_m)$, and it can be factorized into the product of individual policies of $N$ agents:
\begin{equation}\label{eq:app:mu-decomp}
    \mu^{tot}_m(\boldsymbol{a}|\boldsymbol{o}) = \mu^{tot}_m(\boldsymbol{a}|\boldsymbol{o}, \boldsymbol{z}_m) = \prod_{i=1}^{N} \mu^i_m(a^i|o^i, z_m^i),
\end{equation}
where $\boldsymbol{z}_m = \{z_m^1, \dots, z_m^N\}$ corresponds to the set of skills for all agents.
\end{assumption}

To decompose the global optimal policy, we restate the following results from~\cite{wang2023offline-omiga}:
\begin{lemma}[Restated Proposition 4.2 of \cite{wang2023offline-omiga}]
    Assume that the skill-conditioned policy set $\Pi(\mathcal{Z}_m)$ is expressive enough such that $\Pi(\mathcal{Z}_m)=\Pi$, i.e., $\Pi(\mathcal{Z}_m)$ can represent all possible policies. Then for a behavior-regularized Dec-POMDP with reverse KL regularization, the \textbf{single task} optimal global policy $\tilde \pi^{tot,*}_m$ for task $m$ satisfies:
    \begin{equation}\label{eq:app:global-pi-decomp}
        \tilde\pi^{tot,*}_m(\boldsymbol{a}|\boldsymbol{o},\boldsymbol{z}_m) = \mu^{tot}_m (\boldsymbol{a}|\boldsymbol{o},\boldsymbol{z}_m) \cdot \exp \left( \frac{Q^{tot,*}_m(\boldsymbol{o},\boldsymbol{a}) - V^{tot,*}_m(\boldsymbol{o})}{\beta_m} \right),
    \end{equation}
    where $Q^{tot,*}_m$ and $V^{tot,*}_m$ are optimal value functions.
\end{lemma}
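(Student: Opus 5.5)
The plan is to treat the single-task problem of task $m$ as an entropy-regularized (reverse-KL) control problem over joint policies and to solve it pointwise by a Gibbs variational argument. The reverse-KL regularization is taken per step, which is the standard reading of $D_{KL}(\pi\Vert\mu_m)$ in behavior-regularized RL. Concretely, I take the objective
\begin{equation*}
J_m(\pi)-\beta_m D_{KL}(\pi\Vert\mu_m)=\mathbb E_{P_m,\pi}\Big[\sum_t\gamma^t\Big(R_m(s_t,\boldsymbol a_t)-\beta_m\log\frac{\pi(\boldsymbol a_t|\boldsymbol o_t,\boldsymbol z_m)}{\mu^{tot}_m(\boldsymbol a_t|\boldsymbol o_t,\boldsymbol z_m)}\Big)\Big],
\end{equation*}
with $\mu^{tot}_m$ written in its skill-conditioned form from Assumption~\ref{asp:factorization}. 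I then introduce the regularized value functions $Q^{tot}_\pi(\boldsymbol o,\boldsymbol a)=r(\boldsymbol o,\boldsymbol a)+\gamma\,\mathbb E_{\boldsymbol o'}[V^{tot}_\pi(\boldsymbol o')]$ and $V^{tot}_\pi(\boldsymbol o)=\mathbb E_{\boldsymbol a\sim\pi}\big[Q^{tot}_\pi(\boldsymbol o,\boldsymbol a)-\beta_m\log(\pi/\mu^{tot}_m)\big]$. These are the joint-level counterparts of the quantities in \eqref{eq:value-decomp}--\eqref{eq:v-loss}, and, as in the paper's critic, they are indexed by the joint observation $\boldsymbol o$.

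The key step is the one-step improvement problem at a fixed $\boldsymbol o$. Given $Q^{tot}$, I maximize $\mathbb E_{\boldsymbol a\sim\pi}[Q^{tot}(\boldsymbol o,\boldsymbol a)]-\beta_m D_{KL}(\pi(\cdot|\boldsymbol o,\boldsymbol z_m)\Vert\mu^{tot}_m(\cdot|\boldsymbol o,\boldsymbol z_m))$ over the probability simplex. I add a multiplier for the normalization constraint $\sum_{\boldsymbol a}\pi=1$ and set the derivative of the Lagrangian to zero. This gives $\log\pi=\log\mu^{tot}_m+Q^{tot}/\beta_m-\text{const}$, so $\pi\propto\mu^{tot}_m\exp(Q^{tot}/\beta_m)$. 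The objective is strictly concave in $\pi$, so this stationary point is the unique maximizer. Equivalently, the Donsker--Varadhan/Gibbs identity shows that the maximal value is $\beta_m\log\mathbb E_{\mu^{tot}_m}[\exp(Q^{tot}/\beta_m)]$. That value is exactly the log-partition function, which I identify with $V^{tot}$. Substituting back, the normalizer is $\exp(V^{tot}/\beta_m)$, and this yields the stated form $\tilde\pi^{tot,*}_m=\mu^{tot}_m\exp\big((Q^{tot,*}_m-V^{tot,*}_m)/\beta_m\big)$.

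To pass from this pointwise statement to global optimality, I would argue that the soft Bellman operator
\begin{equation*}
(\mathcal T Q)(\boldsymbol o,\boldsymbol a)=r+\gamma\,\mathbb E_{\boldsymbol o'}\big[\beta_m\log\mathbb E_{\boldsymbol a'\sim\mu^{tot}_m}\exp(Q(\boldsymbol o',\boldsymbol a')/\beta_m)\big]
\end{equation*}
is a $\gamma$-contraction in sup-norm, because log-sum-exp is $1$-Lipschitz. Hence $\mathcal T$ has a unique fixed point $Q^{tot,*}_m$. A soft policy-improvement lemma, proved by telescoping the performance difference, then shows that the Gibbs policy built from this fixed point dominates every other policy. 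The expressiveness assumption $\Pi(\mathcal Z_m)=\Pi$ is used only to guarantee that this unconstrained maximizer actually lies in the skill-conditioned class, so the optimum over $\Pi(\mathcal Z_m)$ coincides with the unconstrained one.

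I expect the main obstacle to be this last globalization step under partial observability. The dynamics are Markov in $s$ rather than in $\boldsymbol o$, so the Bellman recursion above is only exact if $\boldsymbol o$ (or the joint history $\boldsymbol\tau$) is treated as a sufficient statistic. I would state this convention explicitly, following the original OMIGA setting, before invoking the contraction argument. The pointwise variational calculation itself is routine.
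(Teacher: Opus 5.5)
Your proposal is correct, but it takes a different route from the paper. The paper gives no proof of this lemma; it simply imports it from OMIGA. The standard derivation of that result is a one-step KKT computation, the same template the paper reuses in its proof of Theorem~\ref{thm:Thm1}. There one maximizes $\mathbb E_{\pi}[Q^{tot}]-\beta_m D_{KL}(\pi\Vert\mu^{tot}_m)$ at each $\boldsymbol o$, with a normalization multiplier $U(\boldsymbol o)$ and nonnegativity multipliers. Complementary slackness removes the nonnegativity multipliers, and stationarity gives $\pi=\mu^{tot}_m\exp\big((Q^{tot}-U)/\beta_m-1\big)$. Substituting this into the definition of $V^{tot}$ then shows $V^{tot}=U+\beta_m$. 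That template is more portable: it carries over to general $f$-divergence regularizers and to the constrained Lagrangian of the continual problem (\ref{eq:problem}). However, it only characterizes the greedy step. It leans on regularized-MDP theory both for the existence of $Q^{tot,*}_m$ and for the fact that acting greedily with respect to it is globally optimal. Your Gibbs/Donsker--Varadhan step gives the closed form and the identification $V^{tot}=\beta_m\log\mathbb E_{\mu^{tot}_m}\exp(Q^{tot}/\beta_m)$ at once. Your contraction and soft policy-improvement arguments then supply exactly the part the citation leaves implicit, at the cost of being tied to the KL case (which is all the lemma needs). You also make two things explicit. First, the recursion treats $\boldsymbol o$ as Markov, which the paper does tacitly by writing $r(\boldsymbol o,\boldsymbol a)$ and $V^{tot}(\boldsymbol o')$ in \eqref{eq:q-loss}. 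Second, $\Pi(\mathcal Z_m)=\Pi$ is needed only to keep the maximizer inside the skill-conditioned class. One line is missing: the lemma speaks of \emph{the} optimal policy, so you should state uniqueness. Your ingredients already give it. $V^{tot}_\pi\le V^{tot,*}_m$ everywhere implies $Q^{tot}_\pi\le Q^{tot,*}_m$. Strict concavity of the one-step problem then forces $V^{tot}_\pi(\boldsymbol o)<V^{tot,*}_m(\boldsymbol o)$ wherever $\pi(\cdot|\boldsymbol o)$ differs from the Gibbs policy.
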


Following OMIGA~\cite{wang2023offline-omiga}, incorporating the value decomposition scheme in Equation \eqref{eq:value-decomp} and behavior decomposition in Equation \eqref{eq:app:mu-decomp} into the optimal global policy $\tilde \pi^{tot,*}$ in Equation \eqref{eq:app:global-pi-decomp} naturally leads to the decomposition of the optimal global policy:

\begin{lemma}[Equivalence of Decompositions]
\label{lemma:decomposition}
Under the value decomposition scheme in Equation \eqref{eq:value-decomp} and behavior policy decomposition in \eqref{eq:app:mu-decomp}, we have $\tilde\pi^{tot,*}_m (\boldsymbol{a}|\boldsymbol{o},\boldsymbol{z}_m) = \prod_{i=1}^n \tilde\pi^{i,*}_m(a^i|o^i,z^i_m)$, where $\tilde\pi^{i,*}_m$ is defined as follows:
\begin{equation}
    \tilde\pi^{i,*}_m(a^i|o^i, z_m^i) = \mu^i_m(a^i|o^i, z_m^i) \cdot \exp \left( \frac{w^{i,*}_m(\boldsymbol{o})}{\beta_m} A^{i,*}_m(o^i,a^i) \right),
\end{equation}
where $A^{i,*}_m(o^i,a^i)=Q^{i.*}_m(o^i,a^i) - V^{i,*}_m(o^i)$ is the individual advantage function. Conversely, the decomposition of $\tilde\pi^{tot,*}_m$ also implies the decomposition of $\mu^{tot}_m$ under the same condition.
\end{lemma}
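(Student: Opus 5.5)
The plan is a direct substitution argument. I would start from the single-task optimal global policy in the restated Proposition 4.2 of \cite{wang2023offline-omiga}, Equation \eqref{eq:app:global-pi-decomp}. I would then rewrite its exponent using the value decomposition \eqref{eq:value-decomp}, evaluated at the optimal critics $Q^{i,*}_m, V^{i,*}_m, w^{i,*}_m, b^*_m$. Subtracting the two lines of \eqref{eq:value-decomp} cancels the bias term $b(\boldsymbol{o})$, which gives
\begin{equation*}
Q^{tot,*}_m(\boldsymbol{o},\boldsymbol{a}) - V^{tot,*}_m(\boldsymbol{o}) = \sum_{i=1}^n w^{i,*}_m(\boldsymbol{o})\, A^{i,*}_m(o^i,a^i).
\end{equation*}
The exponential of a sum is a product of exponentials. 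By Assumption \ref{asp:factorization}, the factor $\mu^{tot}_m(\boldsymbol{a}|\boldsymbol{o},\boldsymbol{z}_m)$ also splits as $\prod_i \mu^i_m(a^i|o^i,z^i_m)$. Grouping the factors agent by agent then yields $\tilde\pi^{tot,*}_m = \prod_i \mu^i_m \exp\big(w^{i,*}_m A^{i,*}_m/\beta_m\big) = \prod_i \tilde\pi^{i,*}_m$. The weight $w^{i,*}_m$ depends on the joint $\boldsymbol{o}$, but that causes no problem: $\boldsymbol{o}$ is part of the conditioning, so each factor still depends on the action only through $a^i$.

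The step I expect to need real care is checking that each $\tilde\pi^{i,*}_m$ is a genuine normalized distribution. Without that, the product identity is only an identity of unnormalized functions, and the factors are not individual policies. For this I would use the optimality condition of the individual value loss \eqref{eq:v-loss}, taking its temperature equal to $\beta_m$, as in OMIGA. Differentiating $L_V$ in $V^i(o^i)$ and setting the derivative to zero gives
\begin{equation*}
\mathbb E_{a^i\sim\mu^i_m}\Big[\exp\Big(\tfrac{w^{i,*}_m(\boldsymbol{o})}{\beta_m}A^{i,*}_m(o^i,a^i)\Big)\Big]=1 .
\end{equation*}
This says exactly that $\sum_{a^i}\tilde\pi^{i,*}_m(a^i|o^i,z^i_m)=1$. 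The product of the normalized factors is then normalized, which is consistent with $\tilde\pi^{tot,*}_m$ being a distribution. The condition holds pointwise in $\boldsymbol{o}$, and the conditioning on $z^i_m$ simply passes through via Assumption \ref{asp:factorization}.

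For the converse, suppose $\tilde\pi^{tot,*}_m=\prod_i\tilde\pi^{i,*}_m$ with the factors of the stated form. Inverting \eqref{eq:app:global-pi-decomp} gives $\mu^{tot}_m=\tilde\pi^{tot,*}_m\exp\big(-(Q^{tot,*}_m-V^{tot,*}_m)/\beta_m\big)$. Applying the same additive decomposition of the exponent and dividing factor by factor gives $\mu^{tot}_m=\prod_i \tilde\pi^{i,*}_m \exp\big(-w^{i,*}_m A^{i,*}_m/\beta_m\big)=\prod_i\mu^i_m$, which is the behavior factorization \eqref{eq:app:mu-decomp}. This direction is purely algebraic and needs no normalization argument, since $\mu^{tot}_m$ is already a distribution by assumption.
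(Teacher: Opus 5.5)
Your core argument is exactly the paper's proof and already establishes the lemma as stated: cancel $b(\boldsymbol{o})$ to get $Q^{tot,*}_m-V^{tot,*}_m=\sum_i w^{i,*}_m A^{i,*}_m$, split the exponential, factor $\mu^{tot}_m$ via Assumption~\ref{asp:factorization}, group agent by agent, and run the algebra backwards for the converse. The normalization check you single out is not part of the statement (the paper never does it), and as written it fails: $V^i$ depends only on $o^i$ while $w^{i,*}_m$ depends on the joint $\boldsymbol{o}$, so stationarity of $L_V$ in $V^i(o^i)$ gives $\mathbb{E}_{a^i\sim\mu^i_m}\bigl[\exp\bigl(w^{i,*}_m A^{i,*}_m/\beta_m\bigr)\bigr]=1$ only as a $w^{i,*}_m$-weighted average over all $\boldsymbol{o}$ sharing the same $o^i$, not pointwise in $\boldsymbol{o}$, so you should drop that claim or state it as an extra assumption.
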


\begin{proof}
    Substituting the factorized behavior policy into the optimal policy form:
    \begin{align}
    \begin{split}
        \tilde\pi_m^{tot,*}(\boldsymbol{a}|\boldsymbol{o}, \boldsymbol{z}_m) 
        &= \mu^{tot}_m (\boldsymbol{a}|\boldsymbol{o}, \boldsymbol{z}_m) \cdot \exp \left( \frac{Q^{tot,*}_m(\boldsymbol{o},\boldsymbol{a}) - V^{tot,*}_m(\boldsymbol{o})}{\beta_m} \right)\\
        &= \left( \prod_{i=1}^N \mu_m^i(a^i|o^i, z_m^i) \right) \exp\left( \sum_{i=1}^N \frac{w_m^{i,*}(\boldsymbol{o})}{\beta_m} A_m^{i,*}(o^i, a^i) \right) \\
        &= \prod_{i=1}^N \left[ \mu_m^i(a^i|o^i, z_m^i) \exp\left( \frac{w_m^{i,*}(\boldsymbol{o})}{\beta_m} A_m^{i,*}(o^i, a^i) \right) \right] \\
        &= \prod_{i=1}^N \tilde\pi_m^{i,*}(a^i|o^i, z_m^i).
    \end{split}
    \end{align}
    On the other hand, rewriting the deduction reversely shows that the decomposition of $\tilde\pi^{tot,*}_m$ also implies the decomposition of $\mu^{tot}_m$. To sum up, the decompositions of $\mu^{tot}_m$ and $\tilde\pi^{tot,*}_m$ are equivalent.
\end{proof}

Based on the above results, we show that the optimal solution of the continual learning problem \eqref{eq:problem} has the following form:
\begin{align}
    \begin{split}
        &\pi_m^i(a^i|o^i,z^i_m)\propto \exp\left(\sum_{k=1}^m \Delta_m^{k,*}(s)\log \tilde\pi^{i,*}_k(a^i|o^i,z^i_k)\right),
    \end{split}
    \end{align}
    where $\Delta^{k,*}_m(s)=\frac{\lambda_k\beta_k d_k(s)}{\sum_{l=1}^m \lambda_l\beta_l d_l(s)}$, $\tilde\pi^{i,*}_k(a^i|o^i,z^i_k)=\exp\left(\frac{w^{i,*}_k(\boldsymbol{o})}{\beta_k} A_k^{i,*}(o^i,a^i) + \log \mu_k^i(a^i|o^i,z^i_k)\right)$ is the single task optimal policy, $\lambda_m=1$, and $\lambda_k,k=1,\cdots,m-1$ are Lagrange multipliers satisfying
    \begin{align}\label{eq:app:slack}
        \nonumber & \lambda_k [(J_k(\pi_m) -\beta_k D_{KL}(\pi_m\Vert\mu_k)) \\
         & -(J_k(\pi_k)-\beta_k D_{KL}(\pi_k\Vert \mu_k) - \delta_k)] = 0.
    \end{align}

\begin{proof}
    We restate the continual learning problem in Equation \eqref{eq:problem} formally as follows:
    \begin{align}
    \begin{split}
        \max_{\pi_m^{tot}} \quad & \mathbb{E}_{P_m, \pi_m^{tot}} [Q_m^{tot}(\boldsymbol{o}, \boldsymbol{a})] - \beta_m D_{{KL}}(\pi_m^{tot} \Vert \mu_m^{tot}) \\
    \text{s.t.} \quad & \mathbb{E}_{P_k, \pi_m^{tot}} [Q_k^{tot}(\boldsymbol{o}, \boldsymbol{a})] - \beta_k D_{{KL}}(\pi_m^{tot} \Vert \mu_k^{tot}) \ge L_k, \quad \forall k \in \{1, \dots, m-1\},
    \end{split}
\end{align}
where $L_k:=J_k(\pi_k)-\beta_k D_{KL}(\pi_k\Vert \mu_k) - \delta_k$ represents the performance lower bound for task $k$.

Let $\lambda_k \ge 0$ be the Lagrange multipliers for the performance constraints. For notational uniformity, we set $\lambda_m = 1$ and $L_m=0$ for the objective of current task. Then the Lagrangian function $\mathcal{L}$ is:
\begin{align}
    \mathcal{L}(\pi_m^{tot}, \boldsymbol{\lambda},u_m,\eta_m) &= \sum_{k=1}^m \lambda_k \left[ \sum_{\boldsymbol{o}} d^{\pi_m^{tot}}_k(\boldsymbol{o}) \sum_{\boldsymbol{a}} \pi_m^{tot}(\boldsymbol{a}|\boldsymbol{o}, \boldsymbol{z}_m) \left( Q_k^{tot}(\boldsymbol{o}, \boldsymbol{a}) - \beta_k \log \frac{\pi_m^{tot}(\boldsymbol{a}|\boldsymbol{o}, \boldsymbol{z}_m)}{\mu_k^{tot}(\boldsymbol{a}|\boldsymbol{o}, \boldsymbol{z}_k)} \right) - L_k \right] \nonumber \\
    & \quad - \sum_{\boldsymbol{o}} d^{\pi_m^{tot}}_m(\boldsymbol{o}) \left[ u_m(\boldsymbol{o}) \left( \sum_{\boldsymbol{a}}\pi_m^{tot}(\boldsymbol{a}|\boldsymbol{o}, \boldsymbol{z}_m) - 1 \right) -\sum_{\boldsymbol{a}} \eta_m(\boldsymbol{a}|\boldsymbol{o}, \boldsymbol{z}_m) \pi_m^{tot}(\boldsymbol{a}|\boldsymbol{o}, \boldsymbol{z}_m))\right],
\end{align}
where $u_m(\boldsymbol{o})$ is the multiplier ensuring the normalization constraint $\sum_{\boldsymbol{a}}\pi_m^{tot}(\boldsymbol{a}|\boldsymbol{o}, \boldsymbol{z}_m) = 1$, and $\eta_m(\boldsymbol{a}|\boldsymbol{o}, \boldsymbol{z}_m)$ is the multiplier ensuring the non-negativity constraint $\pi_m^{tot}(\boldsymbol{a}|\boldsymbol{o}, \boldsymbol{z}_m) \ge 0$. Notably, $d^{\pi_m^{tot}}_k(\boldsymbol{o})$ is the stationary joint observation distribution of the current global policy $\pi_m^{tot}$ on the $k$-th Dec-POMDP. However, it can be difficult to evaluate under the offline continual learning setting. Inspired by Stationary Distribution Correction Estimation (DICE) literature~\cite{lee2021optidice}, we replace this marginal joint observation distribution $d^{\pi_m^{tot}}_k(\boldsymbol{o})$ with the empirical distribution $d_k(\boldsymbol{o}) := d^{D_k}(\boldsymbol{o})$, i.e., the joint observation induced by dataset $D_k$.

According to the Karush-Kuhn-Tucker (KKT) conditions, taking the functional derivative of $\mathcal{L}$ with respect to $\pi_m^{tot}(\boldsymbol{a}|\boldsymbol{o},\boldsymbol{z}_m)$ and setting it to zero, we have:
\begin{align}
    \nonumber&\sum_{k=1}^m \lambda_k d_k (\boldsymbol{o}) \left( Q_k^{tot}(\boldsymbol{o}, \boldsymbol{a}) - \beta_k \log \pi_m^{tot}(\boldsymbol{a}|\boldsymbol{o},\boldsymbol{z}_m) - \beta_k + \beta_k \log \mu_k^{tot} (\boldsymbol{a}|\boldsymbol{o},\boldsymbol{z}_k) \right) \\
    &\qquad- d_m(\boldsymbol{o})u_m(\boldsymbol{o})+d_m(\boldsymbol{o})\eta_m(\boldsymbol{a}|\boldsymbol{o},\boldsymbol{z}_m) = 0, \label{eq:app:policy-kkt} \\
    \nonumber &\sum_{\boldsymbol{a}} \pi_m^{tot}(\boldsymbol{a}|\boldsymbol{o},\boldsymbol{z}_m) = 1, \\
    \nonumber &\eta_m(\boldsymbol{a}|\boldsymbol{o},\boldsymbol{z}_m)\pi_m^{tot}(\boldsymbol{a}|\boldsymbol{o},\boldsymbol{z}_m)=0, \\
    \nonumber & 0\le \pi_m^{tot}(\boldsymbol{a}|\boldsymbol{o},\boldsymbol{z}_m) \le 1 \text{ and } 0\le \eta_m(\boldsymbol{a}|\boldsymbol{o},\boldsymbol{z}_m), \\
    & \lambda_k \left[\mathbb{E}_{\pi_m^{tot}} [Q_k^{tot}(\boldsymbol{o}, \boldsymbol{a})] - \beta_k D_{{KL}}(\pi_m^{tot} \Vert \mu_k^{tot}) - L_k \right] = 0, \forall k \in \{1,\cdots, m-1\}.\label{eq:app:lambda-slack}
\end{align}
Rearranging terms in Equation \eqref{eq:app:policy-kkt} to solve for $\pi_m^{tot}(\boldsymbol{a}|\boldsymbol{o}, \boldsymbol{z}_m)$:
\begin{align}\label{eq:app:log-pi-with-C}
\begin{split}
    &\left( \sum_{k=1}^m \lambda_k \beta_k d_k(\boldsymbol{o}) \right) \log \pi_m^{tot}(\boldsymbol{a}|\boldsymbol{o}, \boldsymbol{z}_m) \\
    &=\sum_{k=1}^m \lambda_k \beta_k d_k(\boldsymbol{o}) \left( Q_k^{tot}(\boldsymbol{o}, \boldsymbol{a})/\beta_k + \log \mu_k^{tot}(\boldsymbol{a}|\boldsymbol{o},\boldsymbol{z}_m) \right) + C(\boldsymbol{o}) + d_m(\boldsymbol{o}) \eta_m(\boldsymbol{a}|\boldsymbol{o}, \boldsymbol{z}_m),
\end{split}
\end{align}
where $C(\boldsymbol{o})$ absorbs terms independent of global action $\boldsymbol{a}$. Next, we define the Reusability Score $\Delta_m^{k,*}(\boldsymbol{o})$ as:
\begin{equation}
    \Delta_m^{k,*}(\boldsymbol{o}) = \frac{\lambda_k \beta_k d_k(\boldsymbol{o})}{\sum_{j=1}^m \lambda_j \beta_j d_j(\boldsymbol{o})}.
\end{equation}
Note that $\sum_{k=1}^m \Delta_m^{k,*}(\boldsymbol{o}) = 1$. Further rearranging terms in Equation \eqref{eq:app:log-pi-with-C} and omitting $C(\boldsymbol{o})$ terms, the optimal joint policy becomes:
\begin{align}
\begin{split}
    \pi_m^{tot, *}(\boldsymbol{a}|\boldsymbol{o},\boldsymbol{z}_m) &\propto \exp \left( \sum_{k=1}^m \Delta_m^{k,*}(\boldsymbol{o}) \left( \frac{A_k^{tot}(\boldsymbol{o}, \boldsymbol{a})}{\beta_k} + \log \mu_k^{tot}(\boldsymbol{a}|\boldsymbol{o},\boldsymbol{z}_k) \right) + \frac{d_m}{\sum_{j=1}^m \lambda_j \beta_j d_j(\boldsymbol{o})} \eta_m(\boldsymbol{a}|\boldsymbol{o}, \boldsymbol{z}_m) \right) \\
    &=\exp \left( \sum_{k=1}^m \Delta_m^{k,*}(\boldsymbol{o}) \log \tilde\pi_k^{tot,*}(\boldsymbol{a}|\boldsymbol{o},\boldsymbol{z}_k)  + \frac{d_m}{\sum_{j=1}^m \lambda_j \beta_j d_j(\boldsymbol{o})} \eta_m(\boldsymbol{a}|\boldsymbol{o}, \boldsymbol{z}_m) \right).
\end{split}
\end{align}

Since $\pi_m^{tot, *} = \exp(\cdots) > 0$, the complementary slackness condition $\eta_m(\boldsymbol{a}|\boldsymbol{o},\boldsymbol{z}_m)\pi_m^{tot}(\boldsymbol{a}|\boldsymbol{o},\boldsymbol{z}_m)=0$ enforces $\eta_m(\boldsymbol{a}|\boldsymbol{o},\boldsymbol{z}_m)=0$ for all $\boldsymbol{o},\boldsymbol{a},\boldsymbol{z}_m$, thus we have:
\begin{align}
\begin{split}
    \pi_m^{tot, *}(\boldsymbol{a}|\boldsymbol{o},\boldsymbol{z}_m) 
    &\propto\exp \left( \sum_{k=1}^m \Delta_m^{k,*}(\boldsymbol{o}) \log \tilde\pi_k^{tot,*}(\boldsymbol{a}|\boldsymbol{o},\boldsymbol{z}_k) \right).
\end{split}
\end{align}

To derive the optimal solution for individual policies, we apply Lemma \ref{lemma:decomposition} to further decompose $\pi_m^{tot, *}$:
\begin{align}
    \begin{split}
        \log \pi_m^{tot, *}(\boldsymbol{a}|\boldsymbol{o},\boldsymbol{z}_m)
        &= \sum_{k=1}^m \Delta_m^{k,*}(\boldsymbol{o}) \log \tilde\pi_k^{tot,*}(\boldsymbol{a}|\boldsymbol{o},\boldsymbol{z}_k) + \log C(\boldsymbol{o}) \\
        &= \sum_{k=1}^m \Delta_m^{k,*}(\boldsymbol{o}) \sum_{i=1}^N \log \tilde \pi_k^{i,*}(a^i|o^i,z^i_k) + \log C(\boldsymbol{o}) \\
        &= \sum_{i=1}^N \sum_{k=1}^m \Delta_m^{k,*}(\boldsymbol{o})  \log \tilde \pi_k^{i,*}(a^i|o^i,z^i_k) + \log C(\boldsymbol{o})
    \end{split}
\end{align}

Based on the equivalence of the decompositions of $\mu^{tot}_m$ and $\tilde\pi^{tot,*}_m$, matching terms for each agent $i$ yields the result:
\begin{equation}\label{eq:app:final-logits}
    \log \pi_m^{i,*}(a^i|o^i,z^i_k) = \sum_{k=1}^m \Delta_m^{k,*}(\boldsymbol{o}) \log \tilde\pi_k^{i,*}(a^i|o^i,z^i_k) + \text{const}.
\end{equation}
Gathering Equation \eqref{eq:app:lambda-slack} and Equation \eqref{eq:app:final-logits}, we conclude the proof. 
\end{proof}

\subsection{Derivation of Policy Learning Objectives}

In this section, we derive the tractable learning objectives used to train the skill-conditioned policy $\pi_\theta$ by explicitly connecting the theoretical optimal policies derived in Theorem \ref{thm:Thm1} to the practical loss functions. We adopt the following settings and approximations for practical instantiation:
\begin{enumerate}
    \item \textbf{Parameters:} We set the parameters $\beta_k,k=1,\cdots,m$ to a fixed constant $\beta_1 = \dots = \beta_m = \alpha = 10$.
    \item \textbf{Trajectory-based Policy:} To handle partial observability, we replace all the observation-conditioned policies with trajectory-conditioned policies, e.g., $\tilde\pi_k^{i,*}(a^i|\tau^i,z^i_k)$, which is approximated by a parameterized multi-head model $\pi_\theta(a^i|\tau^i,z^i_k)$ of the same form in Equation \eqref{eq:multi-head}.
    \item \textbf{Centralized Reusability:} We approximate the reusability scores using global state information instead of joint observations during centralized training: $\Delta_m^{k,*}(\boldsymbol{o}) \approx \Delta_m^{k,*}(s)$.
\end{enumerate}

\paragraph{Derivation of $L^{(1)}_{\pi,q}$ in Equation \eqref{eq:actor-loss}}
First, we consider the basic skill discovery objective within single task. The optimal policy form is given by the advantage-weighted behavior policy:
\begin{equation}
    \tilde{\pi}_m^i(a^i|\tau^i, z_m^i) = \mu_m^i(a^i|\tau^i, z_m^i) \exp\left( \frac{w_m^i(\boldsymbol{o})}{\beta_m} A_m^i(o^i, a^i) \right).
\end{equation}
We aim to minimize the KL divergence between this optimal target $\tilde{\pi}_m^i$ and our parameterized policy $\pi_\theta$:
\begin{equation}
    \min_\theta \mathbb{E}_{(\tau^i,a^i) \sim \mathcal{D}_m,z^i_m\sim q_m} \left[ D_{KL}(\tilde{\pi}_m^i(\cdot|\tau^i, z_m^i) \Vert \pi_\theta(\cdot|\tau^i, z_m^i)) \right].
\end{equation}
Expanding the KL divergence:
\begin{align}
\begin{split}
    D_{KL}(\tilde{\pi}_m^i(\cdot|\tau^i, z_m^i) \Vert \pi_\theta(\cdot|\tau^i, z_m^i)) 
    &= \sum_{a^i} \tilde{\pi}_m^i(a^i|\tau^i, z_m^i) \log \frac{\tilde{\pi}_m^i(a^i|\tau^i, z_m^i)}{\pi_\theta(a^i|\tau^i, z_m^i)} \\
    &= - \sum_{a^i} \tilde{\pi}_m^i(a^i|\tau^i, z_m^i) \log \pi_\theta(a^i|\tau^i, z_m^i) + \text{const}.
\end{split}
\end{align}
Substituting $\tilde{\pi}_m^i$, we can rewrite the expectation over actions using importance sampling relative to the behavior policy $\mu_m^i$. Since $\tilde{\pi}_m^i = \mu_m^i \exp(w^i_m A^i_m/\beta_m)$, the importance weight becomes the exponential advantage:
\begin{align}
\begin{split}
    &\mathbb{E}_{(\tau^i,a^i) \sim \mathcal{D}_m,z^i_m\sim q_m} \left[ D_{KL}(\tilde{\pi}_m^i(\cdot|\tau^i, z_m^i) \Vert \pi_\theta(\cdot|\tau^i, z_m^i)) \right] \\
    &= -\mathbb{E}_{(\tau^i,a^i) \sim \mathcal{D}_m, z^i_m \sim q_m} \left[ \frac{\tilde{\pi}_m^i(a^i|\tau^i, z_m^i)}{\mu_m^i(a^i|\tau^i, z_m^i)} \cdot \log \pi_\theta(a^i|\tau^i, z_m^i) \right] + \text{const} \\
    &= -\mathbb{E}_{(\tau^i,a^i) \sim \mathcal{D}_m, z^i_m \sim q_m} \left[ \exp\left( \frac{w^i_m(\boldsymbol{o})}{\beta_m} A_m^i(o^i,a^i) \right) \cdot \log \pi_\theta(a^i|\tau^i, z_m^i) \right] + \text{const}. \\
    &= L_{\pi, q}^{(1)}(\theta) + \text{const}.
\end{split}
\end{align}

\paragraph{Derivation of $L^{(2)}_{\pi,q}$ in Equation \eqref{eq:cont-actor-loss}}

For the skill reuse phase, the target policy $\pi_m^{i,*}$ is the geometric weighted average of single task policies (Theorem \ref{thm:Thm1}). To facilitate in-sample learning, we approximate the ratio between the optimal policy and the behavior policy as follows:
\begin{align}
\begin{split}
    \frac{\pi_m^{i,*}(a^i|\tau^i, z_m^i)}{\mu_m^i(a^i|\tau^i, z_m^i)} &\propto \frac{\exp\left( \sum_{k=1}^m \Delta_m^{k,*}(s) \log \tilde\pi_k^{i,*}(a^i|\tau^i, z_k^i) \right)}{\mu_m^i(a^i|\tau^i, z_m^i)} \\
    &\approx \exp\left( \frac{w_m^i(o)}{\beta_m} A_m^i(o^i, a^i) + \sum_{k=1}^m \Delta_m^{k,*}(s) \log \tilde\pi_k^{i,*}(a^i|\tau^i, z_k^i) \right).
\end{split}
\end{align}
We define the Augmented Advantage $\tilde{A}_m^i$ to capture both the new task advantage and the knowledge of skills:
\begin{equation}
    \tilde{A}_m^i(s, \boldsymbol{o}, a^i, \{z_k^i\}_{k=1}^m) = \frac{w_m^i(\boldsymbol{o})}{\beta_m} A_m^i(o^i, a^i) + \sum_{k=1}^m \Delta_m^{k,*}(s) \log \tilde\pi_k^{i,*}(a^i|\tau^i, z_k^i).
\end{equation}
Similar to the previous derivation, we aim to minimize the KL divergence:
\begin{align}
\begin{split}
    D_{KL}(\pi_m^{i,*}(\cdot|\tau^i, z_m^i) \Vert \pi_\theta(\cdot|\tau^i, z_m^i)) &= \sum_{a^i} \pi_m^{i,*}(a^i|\tau^i, z_m^i) \log \frac{\pi_m^{i,*}(a^i|\tau^i, z_m^i)}{\pi_\theta(a^i|\tau^i, z_m^i)} \\
    &= -\sum_{a^i} \pi_m^{i,*}(a^i|\tau^i, z_m^i) \log \pi_\theta(a^i|\tau^i, z_m^i) + \text{const}.
\end{split}
\end{align}
We further estimate $z^i_k$ locally with $p_k$, and apply importance sampling using the behavior policy $\mu_m^i$ to transform the expectation over actions into an expectation over the offline dataset $\mathcal{D}_m$:
\begin{align}
\begin{split}
& \mathbb{E}_{(\tau^i,a^i) \sim \mathcal{D}_m,z^i_m\sim q_m} \left[ D_{KL}(\pi_m^{i,*}(\cdot|\tau^i, z_m^i) \Vert \pi_\theta(\cdot|\tau^i, z_m^i)) \right] \\
    &\propto -\mathbb{E}_{(s,\tau^i,a^i) \sim \mathcal{D}_m, z^i_m \sim q_m,\{\tilde z^i_k\}_{k=1}^m \sim p_m} \left[ \exp\left( \tilde{A}_m^i(s, \boldsymbol{o}, a^i, \{\tilde z_k\}_{k=1}^m) \right) \cdot \log \pi_\theta(a^i|\tau^i, z_m^i) \right] + \text{const}. \\
    &= L_{\pi, q}^{(2)}(\theta) + \text{const}
\end{split}
\end{align}

\subsection{Discussion}
In this section, we discuss the implications of Theorem \ref{thm:Thm1} and the key principles of COMAD for offline multi-agent continual skill discovery. Firstly, the optimal continual policy presents a geometric mixture of single task optimal policies with Lagrange multipliers being the \textit{gating mechanism} for selecting useful skills. On the one hand, the scale of Lagrange multipliers determines the relative importance of different performance constraints: if for some $k$, $\lambda_k \gg \lambda_l, \forall l\ne k$, then the continual policy gives priority to the performance constraint of task $k$, approximately recovering the optimal single task policy $\tilde \pi_k$. On the other hand, the multipliers filter out useless skills by recognizing constraint conflicts, e.g., optimizing the policy for the performance on task 1 will degrade its performance on task 2. If the skills are not properly partitioned, the continual policy could suffer from mode collapse as shown in the toy example (Figure \ref{fig:toy_kl}) and ablation studies (Figure \ref{fig:ablation-cn}).

Furthermore, the Lagrange multipliers are integrated into the reusability scores $\Delta_m^{k,*}(s)$, which provide fine-grained reusability information based on both performance constraint conflicts and confidence of states. Intuitively, the continual policy is expected to perform well on tasks that are similar (more reusable skills and less conflicts) and on states that are familiar (more confident decision-making) as depicted in the right panel of Figure \ref{fig:main_c}. However, solving the complementary slackness condition for those multipliers requires accurate estimation of the expected returns $J_k(\pi_m)$ and the KL divergences $D_{KL}(\pi_m \Vert \mu_k)$ on all tasks, which is challenging in the offline task stream scenario where extrapolation error from multiple offline multi-agent datasets introduces vast bias and variance into the estimation. Hence the multi-head architecture is used as a simple yet efficient solution for approximating the multipliers to partition and reuse skills as depicted in our implementation (Figure \ref{fig:main_b}).

Lastly and critically, Theorem \ref{thm:Thm1} presents a unified perspective for comparing and reusing skills across heterogeneous tasks through the geometric mixture of action distributions. Essentially, this mixture structure comes from the property of the KL divergence as a special case of the \textit{Bregman Divergence}, which is defined for a strictly convex generator function $\phi$ as $D_\phi(p\|q) = \phi(p) - \phi(q) - \langle \nabla \phi(q), p-q \rangle$. Consequently, the minimization of a mixture of Bregman divergences, formulated as $\min_p \sum_k w_k D_\phi (p\|q_k),\sum_k w_k=1$, is a convex optimization problem. Its first-order optimality condition corresponds to the solution in the dual space where the optimal solution is a linear combination of the reference points:
\begin{equation}
    \nabla \phi(p^*) = \sum_k w_k \nabla \phi(q_k).
\end{equation}
In the case of KL divergence, the generator is the negative entropy $\phi(p) = \sum p \log p$ with the gradient mapping $\nabla \phi(p) = 1 + \log p$. Consequently, the linear combination in the dual space yields $\log p^* = \sum w_k \log q_k$, which corresponds to the geometric mixture in the primal space.

\section{Experiment Details}\label{sec:app:exp-details}
\subsection{Environments and Designed Task Streams}
\begin{figure}[h]
    \centering
    \includegraphics[width=0.85\linewidth]{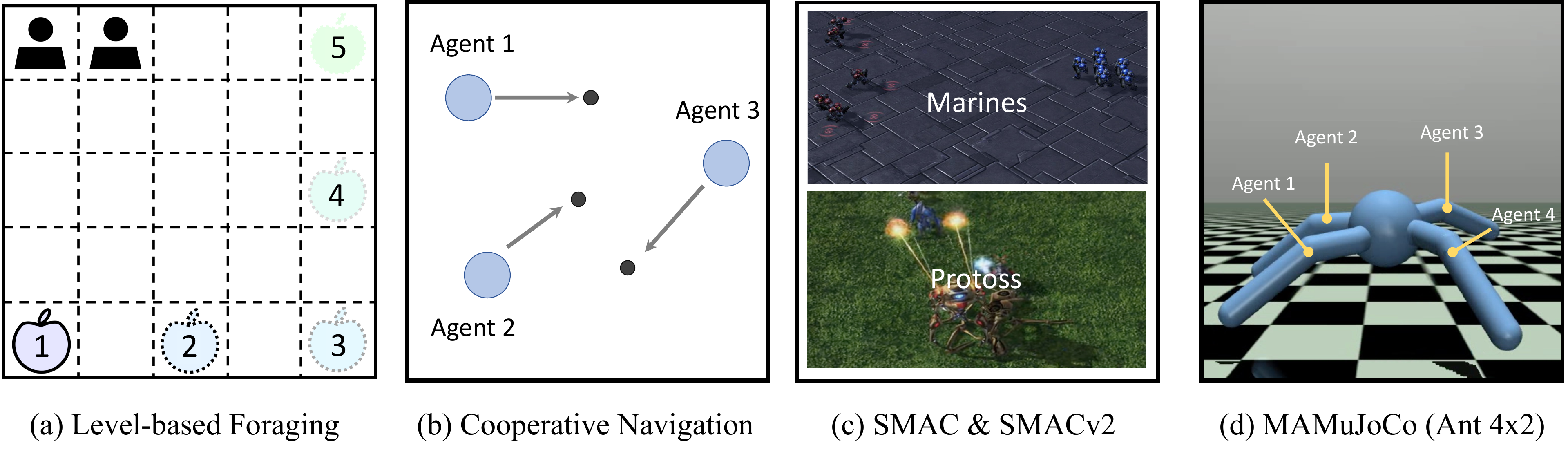}
    \caption{An illustration of the five environments used in this paper.}
    \label{fig:envs}
    \makeatletter
    \protected@edef\@currentlabel{\thefigure(a)}\label{fig:envs_a}
    \protected@edef\@currentlabel{\thefigure(b)}\label{fig:envs_b}
    \protected@edef\@currentlabel{\thefigure(c)}\label{fig:envs_c}
    \protected@edef\@currentlabel{\thefigure(d)}\label{fig:envs_d}
    \makeatother
\end{figure}
\paragraph{Level-based Foraging} LBF is a classic cooperative MARL environment that emphasizes coordination. In this environment, multiple agents are placed in a 2D grid world with the objective of collecting food items scattered across the map. The core mechanism revolves around the level constraints: both agents and food items are assigned specific levels. A food item can only be successfully collected if the sum of the levels of all agents involved in the collection is greater than or equal to the level of the food. This necessitates effective cooperation, as high-level food items require multiple agents to coordinate their arrival at the same location. 
To test the continual learning ability of the algorithms, we design a task stream based on the 2p1f scenario, where 2 agents start from random positions of the grid world and the only food is placed at five different locations: BottomLeft, Bottom, BottomRight, Right, TopRight as shown in Figure \ref{fig:envs_a}. This task stream features the property of changing objective, requiring the continual learning algorithm to adapt to different objectives flexibly.

\paragraph{Cooperative Navigation} CN is a representative task within the Multi-Agent Particle Environment (MPE)~\cite{mordatch2018emergence}, designed to test spatial coordination and collision avoidance in discrete spaces. In this task, $n$ agents are required to reach $n$ specific landmarks (Figure \ref{fig:envs_b}). The primary challenge lies in the lack of predefined target assignment: agents must learn to autonomously decide which agent goes to which landmark to minimize the total distance of the group to all targets. Simultaneously, agents will be stopped for colliding with one another, requiring them to balance group safety with path efficiency. We design a stream of four tasks by changing the number of agents and landmarks respectively: CN-2, CN-3, CN-4, CN-5. This task stream specifically varies in the number of agents and landmarks, while the objective remains the same: reaching all landmarks, requiring the continual learning algorithm to fit agent team of different sizes.

\paragraph{StarCraft Multi-Agent Challenge} SMAC is a micromanagement benchmark environment built upon the StarCraft II game engine. It is one of the most widely used platforms in the MARL community. In SMAC, each agent controls an individual unit with the goal of defeating an enemy team controlled by the built-in heuristic AI (Figure \ref{fig:envs_c}). The environment offers diverse scenarios, including homogeneous matchups (e.g., 3m vs. 3m), heterogeneous matchups (e.g., 2s3z), and extreme asymmetric scenarios. Agents must learn sophisticated micromanagement tactics such as focus fire, kiting, and strategic positioning tailored to specific unit types. Due to its high-dimensional state space and the requirement for complex, coordinated decision-making, SMAC provides a challenging benchmark for MARL algorithms. To evaluate the continual learning algorithms on this benchmark, we design two types of task stream including homogeneous units (marines) and heterogeneous units (stalker and zealot), each contains 8 tasks from simple, symmetric to complex, asymmetric:

\begin{itemize}
    \item \textbf{Marines} This stream contains a sequence of marine tasks: 3m, 4m, 10m, 12m, 5m\_vs\_6m, 7m\_vs\_8m, 9m\_vs\_10m, 13m\_vs\_15m;
    \item \textbf{Stalker-Zealot} This stream contains a sequence of stalker+zealot tasks: 1s1z, 2s3z, 2s4z, 3s5z, 4s4z, 2s1z\_vs\_3z, 2s2z\_vs\_4z, 4s4z\_vs\_8z.
\end{itemize}

\paragraph{SMACv2} SMACv2 is an advanced version of the original SMAC designed to address the overfitting issues observed in the benchmark by introducing stochasticity. The original SMAC scenarios were found to be too static (e.g., fixed initial positions and unit compositions), leading algorithms to memorize specific trajectories rather than learning generalized combat intelligence. To combat this, SMACv2 introduces procedural generation, featuring randomized starting positions, randomized unit compositions, and more challenging observation masks. These additions demand high generalization capabilities, forcing algorithms to learn robust cooperative logic. To construct task streams, we change the number of allies and enemies and create 6 tasks for each of the three scenarios: 
\begin{itemize}
    \item \textbf{Protoss} 5v5, 6v6, 10v10, 20v20, 10v11, 20v23;
    \item \textbf{Terran} 5v5, 6v6, 10v10, 20v20, 10v11, 20v23;
    \item \textbf{Zerg} 5v5, 6v6, 10v10, 20v20, 10v11, 20v23.
\end{itemize}

\paragraph{Multi-agent MuJoCo} MAMuJoCo is an extension of the classic single-agent MuJoCo~\cite{todorov2012mujoco} continuous control tasks. The core idea is to decompose a complex robot (such as the Ant, HalfCheetah, or Humanoid) into multiple segments, where each segment is controlled by a different agent. Unlike the loose coordination found in other environments, MAMuJoCo features tightly coupled cooperation. Because all agents jointly control the same physical entity, a single erroneous action by one agent can cause the entire robot to lose balance or fall. This environment is characterized by continuous action spaces and highly non-linear dynamics, making it an essential tool for researching continuous-control MARL, hierarchical reinforcement learning, and fine-grained action alignment. Different from all above environments, MAMuJoCo provides an unbounded reward, thus we normalize the returns using the transformation: $R\leftarrow (R-R_{\text{min}})/(R_{\text{max}}-R_{\text{min}})$, where we set $R_{\text{max}}=5000$ and $R_{\text{min}}=0$ based on the range of returns in the datasets (Table~\ref{tab:app:data-mamujoco}). We design two types of task streams based on the 4-agent Ant robot (Ant 4x2) depicted in Figure \ref{fig:envs_d}, whose ankles are controlled by 4 agents. The two streams concern varying objectives and varying dynamics:
\begin{itemize}
    \item \textbf{Reward} This task stream contains 8 tasks, which vary in the reward functions of tasks, requiring the agent to move forward as fast as possible in the degrees of 0 (the original objective), 45, 90, 135, 180, 225, 270, and 315, respectively.
    \item \textbf{Dynamics} This task stream contains 6 non-monotonic tasks, which are implemented by disabling controllable legs of the ant agent: Back, Diagonal (Back + Front), Front, Mixed (all except the front leg), Right, Normal. The reward function is fixed, requiring the agent to move in the direction of 0 degree as fast as possible.
\end{itemize}

{\color{black}
Note that here we arrange all tasks generally from easier to harder for a unified evaluation protocol. Yet task ordering has a noticeable effect on continual learning performance, as prior work suggests that orders maximizing task dissimilarity can be optimal~\cite{li2025optimal-order, bell2022effect-order}. In our setting, proper task ordering may help learn more reusable skills and we conduct ablation studies in Section~\ref{subsec:app:task-order} for further analysis.
}

\subsection{Data Collection and Datasets}
We construct offline datasets for all except MaMuJoCo tasks in the designed task streams based on the PyMARL implementation\footnote{\url{https://github.com/oxwhirl/pymarl}} of the MARL algorithm QMIX~\cite{rashid2018qmix}, and construct the offline datasets of MaMuJoCo tasks based on the HARL implementation\footnote{\url{https://github.com/PKU-MARL/HARL}} of the continuous MARL algorithm HATD3~\cite{Zhong2024harl}. Considering the large scale of the datasets, we only collect data with two types of qualities: expert and medium. Their properties are listed below:
\begin{itemize}
    \item The \textbf{Expert} dataset contains trajectory data collected by QMIX or HATD3 policies trained by 2M steps and 10M steps respectively. The statistics info is recorded to aid constructing the medium datasets.
    \item The \textbf{Medium} dataset contains trajectory data collected by QMIX or HATD3 policies, whose training processes are stopped when the evaluated win rates (SMAC) or returns (other environments except SMAC and SMACv2) exceed half of the corresponding expert policies. Note that for SMACv2, QMIX cannot reach very high win rates as in SMAC tasks, hence we only collect medium datasets with the converged policy trained by QMIX.
\end{itemize}
We also record the details of these datasets including the average trajectory length and the average returns, as shown in Table~\ref{tab:app:lbf-data}, \ref{tab:app:cn-task}, \ref{tab:app:smac-data}, \ref{tab:app:smacv2-data} and \ref{tab:app:data-mamujoco}. Note that all dataset are processed to have exactly 2000 trajectories each.

\begin{table}[t]
  \centering
  \caption{Details of datasets of 5 LBF tasks.}
  \resizebox{0.8\linewidth}{!}{
    \begin{tabular}{c|cccccccccc}
    \toprule
    Task  & \multicolumn{2}{c}{Bottom} & \multicolumn{2}{c}{BottomLeft} & \multicolumn{2}{c}{BottomRight} & \multicolumn{2}{c}{Right} & \multicolumn{2}{c}{TopRight} \\
    \midrule
    Quality & Expert & Medium & Expert & Medium & Expert & Medium & Expert & Medium & Expert & Medium \\
    \midrule
    Average Length & 39.11  & 38.48  & 5.66  & 42.54  & 10.44  & 24.74  & 6.86  & 13.92  & 5.65  & 18.49  \\
    Average Return & 0.95  & 0.34  & 1.00  & 0.51  & 1.00  & 0.68  & 1.00  & 0.94  & 1.00  & 0.78  \\
    \bottomrule
    \end{tabular}%

    }
  \label{tab:app:lbf-data}%
\end{table}%

\begin{table}[t]
  \centering
  \caption{Details of datasets of 4 CN tasks.}
    \resizebox{0.8\linewidth}{!}{
    \begin{tabular}{c|cccccccc}
    \toprule
    Task  & \multicolumn{2}{c}{CN-2} & \multicolumn{2}{c}{CN-3} & \multicolumn{2}{c}{CN-4} & \multicolumn{2}{c}{CN-5} \\
    \midrule
    Quality & Expert & Medium & Expert & Medium & Expert & Medium & Expert & Medium \\
    \midrule
    Average Length & 3.11  & 36.36  & 14.16  & 33.88  & 18.21  & 36.91  & 18.33  & 33.35  \\
    Average Return & 1.00  & 0.47  & 0.95  & 0.50  & 0.88  & 0.39  & 0.40  & 0.25  \\
    \bottomrule
    \end{tabular}%
}
  \label{tab:app:cn-task}%
\end{table}%

\begin{table}[t]
  \centering
  \caption{Details of datasets of 16 SMAC tasks.}
  \resizebox{1\linewidth}{!}{
    \begin{tabular}{c|cccccccccccccccc}
    \toprule
    Task  & \multicolumn{2}{c}{3m} & \multicolumn{2}{c}{4m} & \multicolumn{2}{c}{10m} & \multicolumn{2}{c}{12m} & \multicolumn{2}{c}{5m\_vs\_6m} & \multicolumn{2}{c}{7m\_vs\_8m} & \multicolumn{2}{c}{9m\_vs\_10m} & \multicolumn{2}{c}{13m\_vs\_15m} \\
    \midrule
    Quality & Expert & Medium & Expert & Medium & Expert & Medium & Expert & Medium & Expert & Medium & Expert & Medium & Expert & Medium & Expert & Medium \\
    \midrule
    Average Length & 23.86  & 23.89  & 26.68  & 26.73  & 30.68  & 31.16  & 27.66  & 32.42  & 28.64  & 30.97  & 27.70  & 28.41  & 31.79  & 30.46  & 32.68  & 31.34  \\
    Average Return & 19.96  & 14.83  & 19.74  & 13.37  & 19.94  & 16.13  & 19.95  & 17.92  & 17.67  & 14.37  & 19.06  & 15.72  & 19.66  & 15.50  & 19.77  & 16.53  \\
    Average Win Rate & 1.00  & 0.54  & 1.00  & 0.42  & 1.00  & 0.51  & 1.00  & 0.69  & 0.66  & 0.44  & 0.88  & 0.46  & 0.94  & 0.42  & 0.94  & 0.43  \\
    \midrule
    \midrule
    Task  & \multicolumn{2}{c}{1s1z} & \multicolumn{2}{c}{2s3z} & \multicolumn{2}{c}{2s4z} & \multicolumn{2}{c}{3s5z} & \multicolumn{2}{c}{4s4z} & \multicolumn{2}{c}{2s1z\_vs\_3z} & \multicolumn{2}{c}{2s2z\_vs\_4z} & \multicolumn{2}{c}{4s4z\_vs\_8z} \\
    \midrule
    Quality & Expert & Medium & Expert & Medium & Expert & Medium & Expert & Medium & Expert & Medium & Expert & Medium & Expert & Medium & Expert & Medium \\
    \midrule
    Average Length & 41.48  & 36.34  & 47.48  & 57.92  & 48.81  & 57.64  & 56.79  & 62.51  & 56.18  & 61.08  & 53.31  & 51.07  & 53.25  & 46.00  & 73.18  & 88.47  \\
    Average Return & 19.72  & 14.97  & 19.72  & 17.69  & 19.87  & 17.15  & 19.81  & 16.82  & 19.79  & 16.60  & 20.02  & 15.93  & 19.66  & 16.59  & 19.30  & 17.05  \\
    Average Win Rate & 0.94  & 0.44  & 0.95  & 0.65  & 0.98  & 0.55  & 0.96  & 0.47  & 0.91  & 0.41  & 0.97  & 0.47  & 0.94  & 0.63  & 0.84  & 0.41  \\
    \bottomrule
    \end{tabular}%
    }
  \label{tab:app:smac-data}%
\end{table}%

\begin{table}[t]
  \centering
  \caption{Details of datasets of 18 SMACv2 tasks.}
  \resizebox{0.75\linewidth}{!}{
    \begin{tabular}{c|cccccc}
    \toprule
    Task  & Protoss-5v5 & Protoss-6v6 & Protoss-10v10 & Protoss-20v20 & Protoss-10v11 & Protoss-20v23 \\
    \midrule
    Quality & \multicolumn{6}{c}{Medium} \\
    \midrule
    Average Length & 68.66  & 69.19  & 76.33  & 87.85  & 80.58  & 81.02  \\
    Average Return & 15.95  & 17.09  & 19.65  & 14.88  & 14.97  & 12.56  \\
    Average Win Rate & 0.66  & 0.72  & 0.57  & 0.31  & 0.31  & 0.11  \\
    \midrule
    \midrule
    Task  & Zerg-5v5 & Zerg-6v6 & Zerg-10v10 & Zerg-20v20 & Zerg-10v11 & Zerg-20v23 \\
    \midrule
    Quality & \multicolumn{6}{c}{Medium} \\
    \midrule
    Average Length & 31.64  & 34.20  & 37.48  & 42.99  & 36.72  & 42.61  \\
    Average Return & 14.93  & 12.76  & 14.83  & 16.54  & 11.82  & 11.84  \\
    Average Win Rate & 0.59  & 0.34  & 0.57  & 0.56  & 0.22  & 0.19  \\
    \midrule
    \midrule
    Task  & Terran-5v5 & Terran-6v6 & Terran-10v10 & Terran-20v20 & Terran-10v11 & Terran-20v23 \\
    \midrule
    Quality & \multicolumn{6}{c}{Medium} \\
    \midrule
    Average Length & 52.86  & 60.87  & 70.32  & 80.74  & 62.61  & 69.94  \\
    Average Return & 13.83  & 17.15  & 18.02  & 16.89  & 13.54  & 15.07  \\
    Average Win Rate & 0.71  & 0.72  & 0.71  & 0.60  & 0.38  & 0.34  \\
    \bottomrule
    \end{tabular}%
    }
  \label{tab:app:smacv2-data}%
\end{table}%

\begin{table}[t]
  \centering
  \caption{Details of datasets of 14 MaMuJoCo tasks.}
  \resizebox{1\linewidth}{!}{
    \begin{tabular}{c|cccccccccccccc}
    \toprule
    Task  & \multicolumn{2}{c}{Reward-0} & \multicolumn{2}{c}{Reward-45} & \multicolumn{2}{c}{Reward-90} & \multicolumn{2}{c}{Reward-135} & \multicolumn{2}{c}{Reward-180} & \multicolumn{2}{c}{Reward-225} & \multicolumn{2}{c}{Reward-270} \\
    \midrule
    Quality & Expert & Medium & Expert & Medium & Expert & Medium & Expert & Medium & Expert & Medium & Expert & Medium & Expert & Medium \\
    \midrule
    Average Length & 999.66  & 999.95  & 999.95  & 999.36  & 999.95  & 999.90  & 999.72  & 999.23  & 999.89  & 999.33  & 999.80  & 1000.00  & 999.69  & 999.57  \\
    Average Return & 4906.74  & 3553.48  & 4792.56  & 2711.80  & 4650.29  & 3623.93  & 4815.57  & 3317.08  & 5177.54  & 3834.87  & 5241.61  & 3047.27  & 5130.75  & 3453.15  \\
    \midrule
    \midrule
    Task  & \multicolumn{2}{c}{Reward-315} & \multicolumn{2}{c}{Dynamics-Back} & \multicolumn{2}{c}{Dynamics-Diagonal} & \multicolumn{2}{c}{Dynamics-Front} & \multicolumn{2}{c}{Dynamics-Mixed} & \multicolumn{2}{c}{Dynamics-Right} & \multicolumn{2}{c}{Dynamics-Normal} \\
    \midrule
    Quality & Expert & Medium & Expert & Medium & Expert & Medium & Expert & Medium & Expert & Medium & Expert & Medium & Expert & Medium \\
    \midrule
    Average Length & 999.85  & 999.05  & 999.74  & 999.48  & 999.70  & 999.73  & 999.65  & 999.18  & 999.97  & 1000.02  & 1000.03  & 999.79  & 999.82  & 999.01  \\
    Average Return & 5161.54  & 3659.56  & 3048.77  & 2711.17  & 4061.96  & 2972.82  & 5118.01  & 2638.49  & 2182.55  & 1058.68  & 3707.13  & 2754.11  & 5600.11  & 4004.47  \\
    \bottomrule
    \end{tabular}%
    }
  \label{tab:app:data-mamujoco}%
\end{table}%

\clearpage

\subsection{Baselines}
\paragraph{Continual Offline MARL Baselines}
As there is no dedicated existing continual offline MARL algorithm, we adapt 6 representative continual learning baselines and continual reinforcement learning algorithms based on the framework of the OMIGA~\cite{wang2023offline-omiga} method. Concretely, OMIGA framework contains critic and actor. We always re-initialize the critic (including the Q, V function and the mixer module) by adding a noise $\epsilon$ of controlled norm $\Vert \epsilon \Vert_2 \le 0.01$ to improve plasticity, inspired by the observation that the learning of critics may be sensitive to biases~\cite{wolczyk2022disentangling}. As for the actor, we take different strategies and adapt the following algorithms:
\begin{itemize}
    \item \textbf{Fine-tune (FT)} No extra strategy is applied to the actor, which is optimized sequentially for each task in the task stream. FT is treated as a soft lower bound.
    
    \item \textbf{From Scratch (FS)} The actor is re-initialized when switch to a new task. FS can be regarded as the upper bound for plasticity, but no transfer exists in the actor level.
    
    \item \textbf{Multi Task (MT)} The actor and the critic are trained with randomly sampled multi-task data at the same time, representing a soft upper bound.
    
    \item \textbf{Elastic Weight Consolidation (EWC)} A regularization-based approach that penalizes changes to the parameters of the actor based on its importance for old tasks. The importance of weights is estimated via the Fisher Information Matrix, balancing the preservation of old knowledge with the plasticity for new tasks.

    \item \textbf{Continual RL without Conflict (OWL)} A method that implements the actor based on a multi-head architecture and a shared feature extractor to minimize interference among different tasks while transfer its learned representations via the shared feature extractor, which is regularized based on the changes to its parameters. Note that OWL always creates an output head for each task and has no explicit transfer among tasks. While our method allocate output heads based on the estimated state density and transfer among tasks via a skill-augmented objective.

    \item \textbf{Rehearsal} An experience replay-based strategy that maintains a buffer of trajectories from prior tasks. During the training of a new task, samples from this buffer are periodically replayed to the actor and critic to prevent catastrophic forgetting of past experiences. In our implementation, we always keep 128 trajectories of old tasks and train on these experiences every two training steps.
    
\end{itemize}

\paragraph{Multi-agent Skill discovery Baselines} 
As far as we know, currently there is no skill discovery algorithm for continual offline MARL, so we introduce the following two state-of-the art baselines from offline multitask MARL literature:
\begin{itemize}
    \item \textbf{ODIS} The algorithms proposes a VAE structure to extract informative coordination skill from multiple offline multi-agent datasets simultaneously, and then leverage a hierarchical coordination policy to choose optimal coordination skills, effectively enhancing generalization on unseen tasks.
    
    \item \textbf{HiSSD} The algorithm leverages a hierarchical framework to jointly extract common coordination skills and task-specific skills to facilitate generalization on unseen tasks.
\end{itemize}

Note that both ODIS and HiSSD construct a fixed-size skill library through pretraining on multi-agent offline datasets, and rely on the generalization ability of this skill library to adapt to downstream tasks in a zero-shot manner. To align with the continual learning setting, we pretrain ODIS on the first tasks of each task stream, and pretrain HiSSD on the first two tasks of each task stream due to its use of contrastive loss between tasks. 

\begin{table}[t]
\renewcommand{\arraystretch}{0.9}
  \centering
  \caption{Hyperparameters of COMAD for all experiments.}
  \resizebox{0.6\linewidth}{!}{
    \begin{tabular}{ll}
    \toprule
    Hyperparameter & \multicolumn{1}{l}{Value} \\
    \midrule
    KL regularization weight $\alpha$ & 10 \\
    Attention projection dimension & 8 \\
    MLP hidden dimension & 64 \\
    MLP depth & 3 \\
    GRU hidden dimension & 64 \\
    GRU depth & 1 \\
    Skill dimension $\text{dim}(z^i_m)$ & 16 \\
    $L_2$ regularization strength $\lambda_{reg}$ & 500 \\
    NCE noise scale $\sigma_s$ & 0.1 \\
    Confidence threshold $d_0$ & \multicolumn{1}{l}{2 (CN)} \\
          & \multicolumn{1}{l}{8 (LBF, SMAC, SMACv2, MaMuJoCo)} \\
    Number of tasks $M$ & \multicolumn{1}{l}{$4 \sim 8$} \\
    Training steps per task $\Delta$ & \multicolumn{1}{l}{$2\times 10^4$ (LBF, CN, SMAC, SMACv2)} \\
          & \multicolumn{1}{l}{$1 \times 10^5$ (MaMuJoCo)} \\
    \multicolumn{1}{l}{Optimizer} & \multicolumn{1}{l}{AdamW} \\
    Learning rate & \multicolumn{1}{l}{$5\times 10^{-4}$} \\
    Weight decay & \multicolumn{1}{l}{$1 \times 10^{-3}$} \\
    Batchsize & 32 \\
    Target network update ratio & 0.005 \\
    \bottomrule
    \end{tabular}%
    }
  \label{tab:app:hyperparam}%
\end{table}%

\begin{algorithm}
   \caption{COMAD Training}
   \label{alg:training}
\begin{algorithmic}[1]
   \STATE {\bfseries Input:} Offline Datasets $\mathcal{D} = \{D_1, \dots, D_M\}$ (Sequentially)
   \STATE {\bfseries Initialize:} Feature extractor $F$, critics $Q, V$, mixer $w, b$, target networks, density estimator feature extractor $F^E$.
   \FOR{task $m = 1$ to $M$}
        \STATE {\color{blue}\textbf{// Head Expansion}}
       \IF{$m = 1$} 
           \STATE Allocate heads $G_1^\pi, G_1^p, G_1^E$.
           \STATE Set active head index $k^* \leftarrow 1$.
       \ELSE
           \STATE Evaluate expected state density $\hat d_k \approx \mathbb{E}_{s \sim D_m}[d_k(s)]$ for old heads $k < m$.
           \IF{$\exists k:\hat d_k > d_0$}
               \STATE Reuse old heads: $k^* \leftarrow \arg\max_k \mathbb{E}[d_k(s)]$.
           \ELSE
               \STATE Expand new heads $G_m^\pi, G_m^p, G_m^E$.
               \STATE Set active head index $k^* \leftarrow m$.
           \ENDIF
       \ENDIF

       \STATE {\color{blue}\textbf{// Optimization Loop}}
       \FOR{sampled batch $B \sim D_m$}
           \STATE Update $Q, V, w, b$ by minimizing critic losses in Equation \eqref{eq:q-loss} and \eqref{eq:v-loss}.
           
           \STATE Update local encoder $p_{k^*}$ by minimizing KL loss in Equation \eqref{eq:kl-loss}.
           \STATE Calculate NCE loss $L_{NCE}$ via Equation \eqref{eq:nce-loss}.
           \IF{$m = 1$ or in Stage 1}
               \STATE $L_{\pi, q} \leftarrow$ vanilla policy loss in Equation \eqref{eq:actor-loss}.
           \ELSE
               \STATE $L_{\pi, q} \leftarrow$ skill-augmented policy loss in Equation \eqref{eq:cont-actor-loss}.
           \ENDIF

            \IF{$m \ge 2$}
                \STATE Apply regularization: $L_{\pi, q} \leftarrow L_{\pi, q} + \lambda_{reg}\| \theta_{F, m} - \theta_{F, 1} \|_2^2$, $L_{NCE} \leftarrow L_{NCE} + \lambda_{reg}\| \theta_{F^E, m} - \theta_{F^E, 1} \|_2^2$.
            \ENDIF
           \STATE Update state density estimator $E_{k^*}$ by minimizing $L_{NCE}$.
           \STATE Update action decoder $\pi_{k^*}$ and global encoder $q_{k^*}$ by minimizing $L_{\pi, q}$.
       \ENDFOR
       \IF{$m = 1$}
           \STATE Save checkpoints $\theta_{F, 1}$ and $\theta_{F^E, 1}$.
       \ENDIF
   \ENDFOR
\end{algorithmic}
\end{algorithm}

\begin{algorithm}
   \caption{COMAD Evaluation}
   \label{alg:evaluation}
\begin{algorithmic}[1]
   \STATE {\bfseries Input:} Target environment and state samples (states $S_{eval}$)
   \FOR{each available head $k$}
       \STATE Evaluate expected state density $\hat{d}_k \approx \mathbb{E}_{s \sim S_{eval}}[d_k(s)]$ using $E_k$.
   \ENDFOR
   \STATE Select execution head: $k^* \leftarrow \arg\max_k \hat{d}_k$.
   
   \WHILE{episode not done}
       \STATE Observe local observation $o^i$ and history $\tau^i$ for each agent $i$.
       \STATE Generate skill embedding: $z^i \sim p_{k^*}(z^i|\tau^i)$.
       \STATE Generate action: $a^i \sim \pi_{k^*}^i(a^i|\tau^i, z^i)$.
       \STATE Execute joint action $\boldsymbol{a} = (a^1,\cdots,a^n)$ in environment.
   \ENDWHILE
\end{algorithmic}
\end{algorithm}

\section{Implementation Details}\label{sec:app:impl-details}
\subsection{Network Architecture and Hyperparameters}
The network architecture of the proposed COMAD framework follows a structural design pattern:
\begin{itemize}
    \item \textbf{Input Processing} To process the local observations $o^i$, actions $a^i$ and global state $s$ with varying length, we employ a self-attention mechanism for all critic and actor modules by first projecting inputs into a latent space via linear embedding layers, and then applying the self-attention shown in Equation \eqref{eq:pop-inv} to obtain fixed-length embeddings that capture inter-agent relationships. 
    \item \textbf{Sequence Modeling} We utilize a single layer Gated Recurrent Unit (GRU)~\cite{chung2014empirical} to model the trajectories for the actor. Concretely, the embeddings extracted by the self-attention mechanism are fed into an MLP, followed by the GRU module to output the sequence embeddings $\tau^i$, which is used to output actions via the output heads; and $h^i$, which is used by the GRU to transmit sequential information.
    \item \textbf{Single Step Modeling} As for the critics including the Q, V, mixer networks and their target networks, the inputs correspond to single steps (such as $o^i$, $s$). These networks directly feed the embeddings obtained via the self-attention mechanism to MLPs and output the values (Q, V values or the weight and bias of the mixing networks).
    \item \textbf{Multi-Head Modules} As presented in Figure~\ref{fig:main_b} of the main text, the skill prior, policy and the confidence module are equipped with multi-head modules of the same structure. Concretely, each head in these modules is a three-layer MLP with hidden dim 64 (approximately 10K parameters) that outputs skill priors $z^i_m$, action distributions $\pi^i_m$ and confidence scores $\Delta_m^k$ respectively. Moreover, the multi-head module support operations of head allocation and switch, providing a consistent and simple interface for the algorithm. 
\end{itemize}

We summarize our setting of hyperparameters in Table~\ref{tab:app:hyperparam}. Note that for our method COMAD, training steps of both stage 1 and stage 2 are half of the total training steps per task $\Delta$. 

\subsection{Training and Evaluation}
In continual learning problem, the training and evaluation process proceed alternately as the datasets of the task stream are provided in a sequential manner. Here we split the training and evaluation process of COMAD into two phases as shown in Algorithm \ref{alg:training} and \ref{alg:evaluation} respectively. In the training phase, COMAD first identifies if new output heads are needed (Line $5\sim 16$). Then the selected head $k^*$ is used in the optimization loop for the current task (Line $18\sim 30$). To preserve the memory of feature extractors $F,F^E$, the algorithm saves the checkpoints of their parameters $\theta_{F,1},\theta_{F^E,1}$ at the end of the training of the first task. In the evaluation phase, COMAD first identifies the best output head via a small sample of states from the target environment, then executes in a decentralized manner with local encoder $p_{k^*}(z^i|\tau^i)$ and action decoder $\pi^i_{k^*}(a^i|\tau^i,z^i)$.

\section{Additional Results}\label{sec:app:addtional-results}
In this section we present the full results on all environments, including additional ablation studies on Marines-expert task of SMAC (Section \ref{subsec:app:ablation-smac}), the sensitivity studies of two critical hyperparameters $\sigma_s$ and $d_0$ (Section \ref{subsec:app:sens}), {\color{black} the effect of task ordering (Section \ref{subsec:app:task-order}), the growth of the skill library (Section \ref{subsec:app:growth})}, the forward transfer and backward transfer metrics (Section \ref{subsec:app:fwt-bwt}), and the learning curves of average performance $\text{P}(t)$ of all methods (Section \ref{subsec:app:curves}).

\begin{figure}[t]
    \centering
    \includegraphics[width=0.6\linewidth]{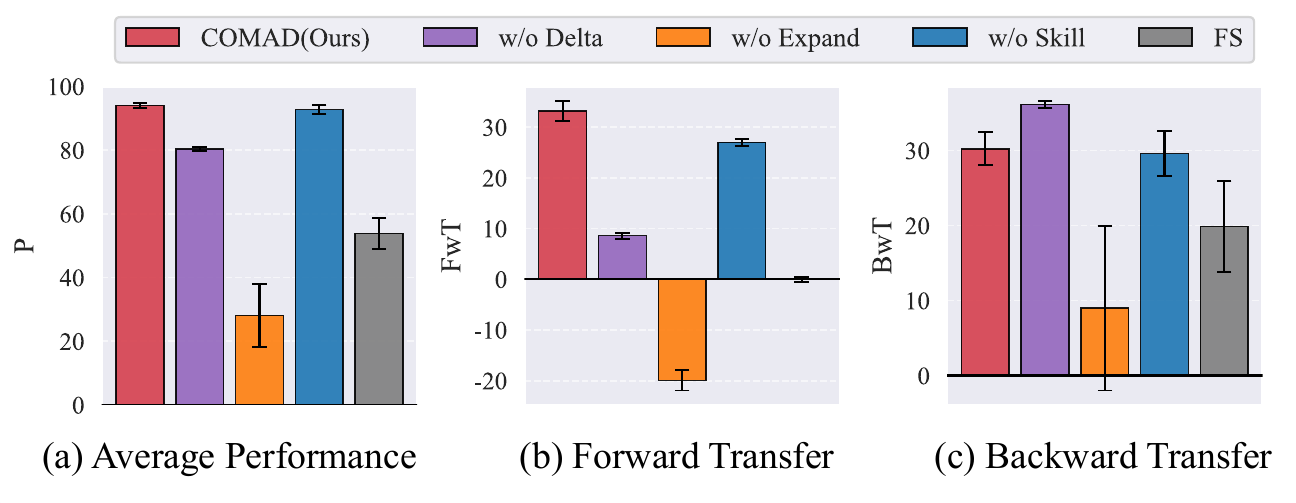}
    \caption{Metrics of ablation studies on Marines-expert task.}
    \label{fig:app:ablation-smac}
\end{figure}
\begin{figure}[t]
    \centering
    \includegraphics[width=0.6\linewidth]{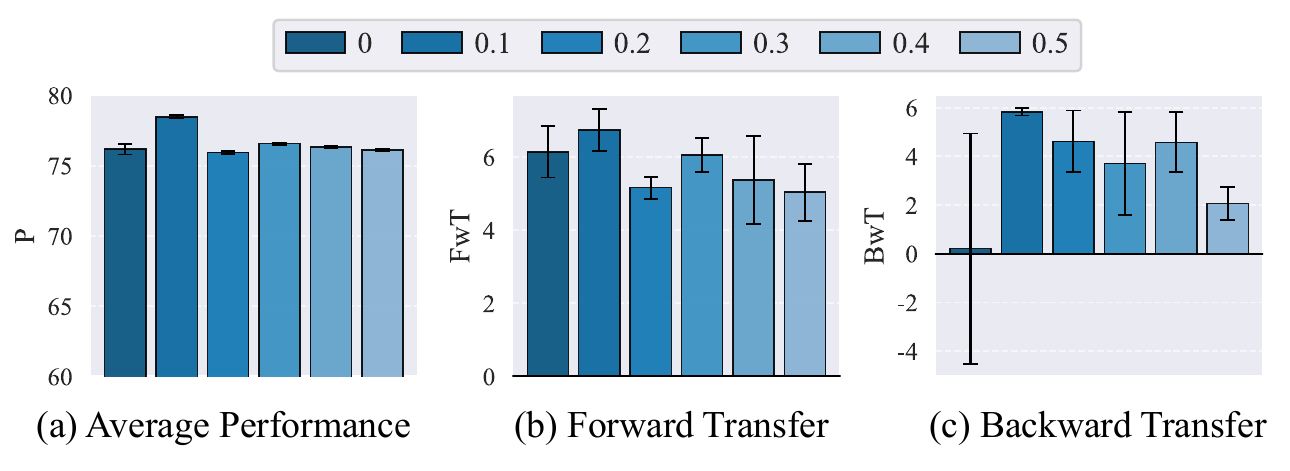}
    \caption{Sensitivity analysis of the noise scale $\sigma_s$ on CN-expert task.}
    \label{fig:sens-cn-noise}
\end{figure}

\begin{figure}[t]
    \centering
    \includegraphics[width=0.6\linewidth]{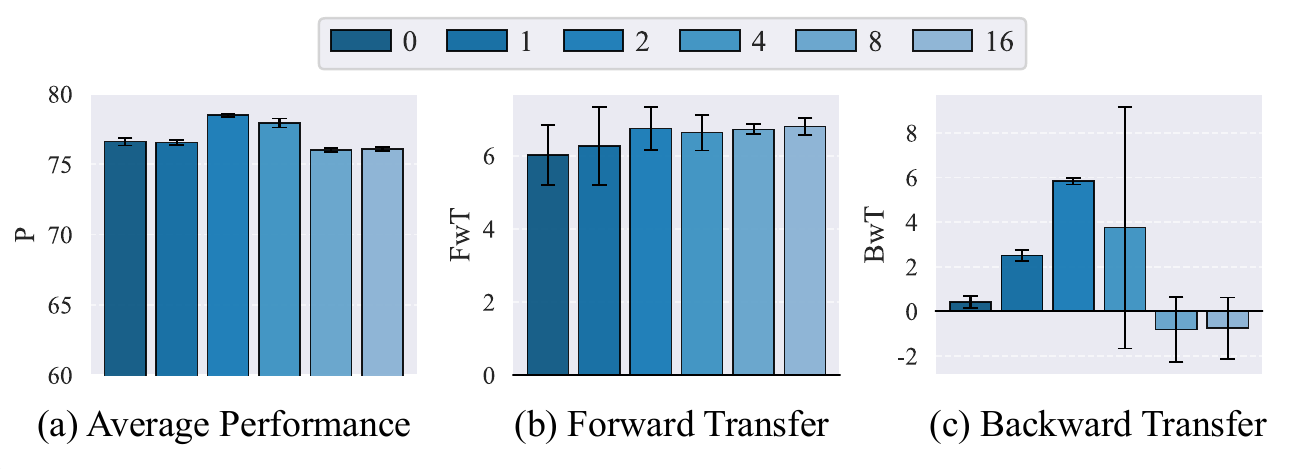}
    \caption{Sensitivity analysis of the confidence threshold $d_0$ on CN-expert task.}
    \label{fig:sens-cn-conf}
\end{figure}

\subsection{Additional Ablation Studies on Marines}\label{subsec:app:ablation-smac}
The results of ablation studies on Marines-expert task are presented in Figure \ref{fig:app:ablation-smac}. In contrast to CN tasks, SMAC tasks emphasize task heterogeneity and thus a fixed-size skill library (w/o Expand) fails to accommodate the evolving coordination skills required for new team configuration, leading to transferability loss and poor overall performance. On the other hand, removing the skill encoders (w/o Skill) results in only minor degradation. We hypothesize that this contributes to the simplicity of single task dataset, on which a monolithic policy is sufficient for learning performant skills. Importantly, the absence of reusability estimation (w/o Delta) significantly impairs forward transfer due to indiscriminative skill reuse and the resulting interference. 
Despite this, it slightly improves backward transfer, but we interpret this as a misleading signal. Since w/o Delta suffers from poor initial performance for each task, it leaves a larger margin for backward improvement via the shared feature extractor during the training of subsequent tasks. Thus the high backward transfer of w/o Delta reflects the recovery from a sub-optimal starting point rather than superior knowledge retention.

\subsection{Sensitivity Studies}\label{subsec:app:sens}
We conduct sensitivity studies on CN-expert task to investigate the influence of the two critical hyperparameters within COMAD: the NCE noise scale $\sigma_s$ and the confidence threshold $d_0$. The results are shown in Figure \ref{fig:sens-cn-noise} and Figure \ref{fig:sens-cn-conf}, showing that the default choices for CN-expert task ($\sigma_s=0.1, d_0=2$) achieves the best performance. When setting a higher noise scale, the estimator tends to focus on noises and gradually lose its capability to distinguish the difference of states among tasks. Besides, when $\sigma_s = 0$, the estimator is unable to identify tasks, leading to the lowest BwT. As for the confidence threshold, lower $d_0$ indicates skill reuse without filtering and higher $d_0$ means more strict reuse, both of which lead to downgraded P and BwT. However, due to the similarity among CN tasks, FwT metric is not sensitive to $d_0$.

\begin{table}[t]
  \centering
  \caption{Metrics $\pm$ std of different algorithms on different task ordering of CN-expert and Marines-expert.}
  \resizebox{\textwidth}{!}{
    \begin{tabular}{c|cccc||c|cccc}
    \toprule
    CN-expert & Seq 0 & Seq 1 & Seq 2 & Seq 3 & Marines-expert & Seq 0 & Seq 1 & Seq 2 & Seq 3 \\
    \midrule
    P     & 78.49$\pm$0.41 & 84.27$\pm$0.49 & 82.81$\pm$2.81 & 88.13$\pm$1.25 & P     & 94.05$\pm$0.83 & 95.66$\pm$4.96 & 93.36$\pm$10.68 & 96.75$\pm$4.61 \\
    FwT   & 6.74$\pm$0.90 & 7.95$\pm$0.71 & 7.83$\pm$0.45 & 8.38$\pm$0.22 & FwT   & 33.20$\pm$1.98 & 35.52$\pm$3.64 & 36.81$\pm$2.25 & 31.67$\pm$4.05 \\
    BwT   & 5.83$\pm$0.83 & 6.27$\pm$0.31 & 6.05$\pm$2.92 & 5.96$\pm$3.75 & BwT   & 30.80$\pm$2.20 & 27.52$\pm$5.93 & 13.43$\pm$2.68 & 24.72$\pm$5.16 \\
    \bottomrule
    \end{tabular}%
    }
  \label{tab:app:order}%
\end{table}%
\subsection{Task Ordering}\label{subsec:app:task-order}
{\color{black}
We conduct experiments on CN-expert and Marines-expert to investigate the effect of task ordering. In the following, Seq 0 stands for the original difficulty order, Seq 1 reverses Seq 0 (from harder to easier), Seq 2 and Seq 3 randomly scatter the task sequences:
\begin{itemize}
    \item \textbf{CN Seq 2}: CN-3, CN-5, CN-2, CN-4;
    \item \textbf{CN Seq 3}: CN-4, CN-2, CN-5, CN-3;
    \item \textbf{Marines Seq 2}: 12m, 3m, 7m\_vs\_8m, 13m\_vs\_15m, 4m, 9m\_vs\_10m, 10m, 5m\_vs\_6m;
    \item \textbf{Marines Seq 3}: 5m\_vs\_6m, 10m, 13m\_vs\_15m, 4m, 9m\_vs\_10m, 3m, 12m, 7m\_vs\_8m.
\end{itemize}

The results in Table~\ref{tab:app:order} suggest that starting from medium-scale tasks (e.g., Seq 3 in both CN and SMAC) appears more beneficial than the difficulty order, indicating that these tasks may provide more reusable skills for later transfer. Nonetheless, COMAD remains effective for these four sequences, showcasing its robustness to the task ordering.

}

\begin{table}[t]
  \centering
  \renewcommand{\arraystretch}{0.9}
  \caption{Average performances, the number of output heads, maximum reusability scores and the reused heads in the learning process of the repeated-CN-expert task stream.}
  \resizebox{0.75\linewidth}{!}{
    \begin{tabular}{c|cccccccc}
    \toprule
    repeated-CN-expert & CN-2  & CN-3  & CN-4  & CN-5  & CN-2  & CN-3  & CN-4  & CN-5 \\
    \midrule
    P   & 24.82 & 48.13 & 64.76 & 78.32 & 76.47 & 77.41 & 79.71 & 78.95 \\
    n\_head & 1     & 2     & 2     & 2     & 2     & 2     & 2     & 2 \\
    max reuse score & N/A   & 1.73  & 2.57  & 2.35  & 2.62  & 2.58  & 2.47  & 2.39 \\
    reused head & new   & new   & CN-3  & CN-3  & CN-2  & CN-3  & CN-3  & CN-3 \\
    \bottomrule
    \end{tabular}%
    }
  \label{tab:app:grow-cn}%
\end{table}%

\begin{table}[t]
  \centering
  \renewcommand{\arraystretch}{0.9}
  \caption{Average performances, the number of output heads, maximum reusability scores and the reused heads in the learning process of the Marines-expert task stream.}
  \resizebox{0.75\linewidth}{!}{
    \begin{tabular}{c|cccccccc}
    \toprule
    Marines-expert & 3m    & 4m    & 10m   & 12m   & 5m\_vs\_6m & 7m\_vs\_8m & 9m\_vs\_10m & 13m\_vs\_15m \\
    \midrule
    P   & 12.5  & 24.35 & 36.77 & 49.08 & 56.67 & 69.32 & 81.64 & 94.05 \\
    n\_head & 1     & 1     & 2     & 2     & 3     & 3     & 4     & 5 \\
    max reuse score & N/A   & 8.56  & 5.83  & 9.34  & 4.62  & 8.07  & 6.47  & 5.53 \\
    reused head & new   & 3m    & new   & 10m   & new   & 5m\_vs\_6m & new   & new \\
    \bottomrule
    \end{tabular}%
    }
  \label{tab:app:grow-marines}%
\end{table}%

\subsection{Growth of the Skill Library}\label{subsec:app:growth}

{\color{black}
To illustrate the growth of the skill library, we extend our experiment to repeated-CN-expert and Marines-expert and present the number of output heads, the maximum reusability scores and the reused heads in Table~\ref{tab:app:grow-cn} and Table~\ref{tab:app:grow-marines}. We observe that in CN($d_0=2$) only 2 heads are allocated and 3 reuses occur in Marines($d_0=8$), indicating that the number of heads grows controllably by setting the confidence threshold $d_0$. With higher $d_0$, the expansion will be sparser and the growth can be effectively limited.
}

\subsection{Forward and Backward Transfer}\label{subsec:app:fwt-bwt}
We list here the results of forward and backward transfer in Table \ref{tab:app:fwt} and Table \ref{tab:app:bwt} respectively, which are complementary to the main results in Table \ref{tab:main-res}. We also report $\text{Fwt} \times 100$ and $\text{BwT} \times 100$ for numerical consistency. Note that MT is omitted as it learns all task simultaneously and FS is omitted in FwT results, which is used as baseline (i.e., FwT of FS is always zero). 

The FwT results showcase similar trend compared to the results of average performance: skill discovery methods (ODIS and HiSSD) with fixed-size skill libraries are limited due to plasticity loss, suggesting that they cannot flexibly learn new skills and perform worse than FS. Besides, FT, EWC and Rehearsal also lose their plasticity due to overfitting (FT), parameter regularization (EWC) or gradient conflict (Rehearsal). Instead, OWL achieves positive forward transfer thanks to parameter isolation, while COMAD exceeds it by actively reusing skills.

The BwT results of skill discovery baselines (ODIS, HiSSD) are relatively higher due to the fixed skill set, while vanilla methods (FS, FT) fail to memorize previous skills and obtain the lowest backward transfer. Interestingly, EWC behaves differently, mitigating catastrophic forgetting on expert dataset of LBF, SMAC-Marines and MAMuJoCo-Reward compared to FS, while exacerbating forgetting on most medium datasets. We hypothesize that the observed degradation in BwT for EWC can be attributed to its sensitivity to dataset quality, thereby affecting the estimation of the Fisher information matrix. Rehearsal still faces the problem of gradient conflict and obtain mediocre backward transfer. OWL achieves positive backward transfer by implicitly utilizing the similarity among tasks while COMAD promotes backward transfer through skill partition and finetuning via the shared feature extractor.

\begin{table*}[t]
  \centering
  \caption{Forward Transfer $\pm$ std of different algorithms on task streams from LBF, CN, SMAC, SMACv2 and MAMuJoCo environments. The best and second-best methods are marked in \textbf{bold} and \underline{underlined}, respectively. An asterisk (*) indicates that a method's mean performance is no lower than the best mean minus one standard deviation of the best method in the same row. All results are based on 5 distinct seeds and 32 episodes per seed on each evaluation step. ``Overall'' reports the average forward transfer of each algorithm over all task streams.}
  \resizebox{\linewidth}{!}{
    \begin{tabular}{cc|c|cccc|cc}
    \toprule
    \multicolumn{1}{c|}{Task Stream} & Dataset & COMAD(ours) & FT    & EWC   & OWL   & Rehearsal & ODIS-FT & HiSSD-FT \\
    \midrule
    \multicolumn{1}{c|}{\multirow{2}[2]{*}{LBF}} & Expert & \textbf{-0.42 $\pm$ 0.57} & -46.18 $\pm$ 18.77 & -12.57 $\pm$ 0.92 & \underline{-0.75 $\pm$ 0.11}* & -3.47 $\pm$ 0.80 & -80.38 $\pm$ 2.88 & -77.55 $\pm$ 0.04 \\
    \multicolumn{1}{c|}{} & Medium & \underline{-3.80 $\pm$ 0.91} & -19.40 $\pm$ 16.00 & -22.41 $\pm$ 8.02 & -17.30 $\pm$ 13.47 & \textbf{-0.36 $\pm$ 0.53} & -57.78 $\pm$ 0.12 & -57.57 $\pm$ 1.87 \\
    \midrule
    \multicolumn{1}{c|}{\multirow{2}[2]{*}{CN}} & Expert & \textbf{6.74 $\pm$ 0.90} & \underline{1.29 $\pm$ 1.15} & -2.48 $\pm$ 3.80 & -4.33 $\pm$ 0.11 & -0.38 $\pm$ 0.30 & -2.74 $\pm$ 0.64 & -5.55 $\pm$ 2.29 \\
    \multicolumn{1}{c|}{} & Medium & \textbf{16.00 $\pm$ 0.39} & \underline{0.71 $\pm$ 0.38} & -0.89 $\pm$ 0.14 & -0.82 $\pm$ 0.64 & -2.50 $\pm$ 0.28 & -19.34 $\pm$ 2.43 & -17.36 $\pm$ 1.21 \\
    \midrule
    \multicolumn{1}{c|}{\multirow{2}[2]{*}{Marines}} & Expert & \textbf{33.20 $\pm$ 1.98} & -3.75 $\pm$ 0.27 & 5.57 $\pm$ 0.30 & \underline{29.83 $\pm$ 0.27} & 13.73 $\pm$ 13.16 & -16.56 $\pm$ 2.69 & -22.08 $\pm$ 0.06 \\
    \multicolumn{1}{c|}{} & Medium & \textbf{10.97 $\pm$ 3.18} & -11.03 $\pm$ 9.26 & -0.64 $\pm$ 1.08 & 0.81 $\pm$ 10.94 & -18.21 $\pm$ 3.49 & 0.31 $\pm$ 3.66 & \underline{2.90 $\pm$ 0.55} \\
    \midrule
    \multicolumn{1}{c|}{\multirow{2}[2]{*}{Stalker-zealot}} & Expert & \underline{24.50 $\pm$ 0.79} & -2.47 $\pm$ 1.39 & -1.47 $\pm$ 0.36 & \textbf{27.55 $\pm$ 0.68} & 0.74 $\pm$ 3.57 & -12.93 $\pm$ 1.70 & -10.73 $\pm$ 7.82 \\
    \multicolumn{1}{c|}{} & Medium & 16.64 $\pm$ 3.27 & 2.67 $\pm$ 0.06 & 2.89 $\pm$ 1.10 & \textbf{22.74 $\pm$ 1.12} & \underline{20.28 $\pm$ 0.36} & 18.22 $\pm$ 0.61 & 11.73 $\pm$ 1.16 \\
    \midrule
    \multicolumn{1}{c|}{Protoss} & Medium & \textbf{13.79 $\pm$ 1.60} & -0.90 $\pm$ 0.09 & 2.07 $\pm$ 0.57 & \underline{13.42 $\pm$ 0.72}* & 0.23 $\pm$ 7.61 & -29.97 $\pm$ 0.42 & -23.96 $\pm$ 0.89 \\
    \multicolumn{1}{c|}{Zerg} & Medium & \textbf{12.76 $\pm$ 1.37} & 2.55 $\pm$ 0.60 & 3.15 $\pm$ 0.39 & \underline{12.48 $\pm$ 1.10}* & -3.25 $\pm$ 11.07 & -38.27 $\pm$ 3.18 & -37.48 $\pm$ 5.56 \\
    \multicolumn{1}{c|}{Terran} & Medium & \textbf{7.27 $\pm$ 1.35} & -3.66 $\pm$ 0.13 & -11.27 $\pm$ 6.10 & -0.24 $\pm$ 5.35 & -5.30 $\pm$ 3.66 & -30.22 $\pm$ 9.72 & \underline{3.46 $\pm$ 3.18} \\
    \midrule
    \multicolumn{1}{c|}{\multirow{2}[2]{*}{Reward}} & Expert & \textbf{36.48 $\pm$ 1.11} & -3.13 $\pm$ 0.41 & -0.50 $\pm$ 5.36 & \underline{34.38 $\pm$ 0.27} & -7.59 $\pm$ 3.67 & -5.31 $\pm$ 1.28 & -5.53 $\pm$ 0.21 \\
    \multicolumn{1}{c|}{} & Medium & \textbf{27.15 $\pm$ 0.18} & 1.80 $\pm$ 0.11 & -0.65 $\pm$ 1.58 & \underline{17.20 $\pm$ 4.45} & 0.69 $\pm$ 0.14 & 1.18 $\pm$ 0.54 & 0.73 $\pm$ 0.31 \\
    \midrule
    \multicolumn{1}{c|}{\multirow{2}[2]{*}{Dynamics}} & Expert & 1.44 $\pm$ 2.35 & \underline{3.82 $\pm$ 4.31}* & -2.82 $\pm$ 0.23 & \textbf{4.17 $\pm$ 0.67} & -11.30 $\pm$ 0.24 & 0.12 $\pm$ 0.36 & 2.10 $\pm$ 0.49 \\
    \multicolumn{1}{c|}{} & Medium & -3.80 $\pm$ 1.47 & 0.57 $\pm$ 0.42* & 0.90 $\pm$ 0.66* & -4.02 $\pm$ 1.82 & \underline{0.98 $\pm$ 0.03}* & 0.63 $\pm$ 0.25* & \textbf{1.26 $\pm$ 0.98} \\
    \midrule
    \multicolumn{2}{c|}{Overall} & \textbf{13.26} & -5.14 & -2.74 & \underline{9.01}  & -1.05 & -18.20 & -15.71 \\
    \bottomrule
    \end{tabular}%
    }
  \label{tab:app:fwt}%
\end{table*}%

\begin{table*}[t]
  \centering
  \caption{Backward Transfer $\pm$ std of different algorithms on task streams from LBF, CN, SMAC, SMACv2 and MAMuJoCo environments. The best and second-best methods are marked in \textbf{bold} and \underline{underlined}, respectively. An asterisk (*) indicates that a method's mean performance is no lower than the best mean minus one standard deviation of the best method in the same row. All results are based on 5 distinct seeds and 32 episodes per seed on each evaluation step. ``Overall'' reports the average backward transfer of each algorithm over all task streams.}
  \resizebox{\linewidth}{!}{
    \begin{tabular}{cc|c|cc|ccc|cc}
    \toprule
    \multicolumn{1}{c|}{Task Stream} & Dataset & COMAD(ours) & FS    & FT    & EWC   & OWL   & Rehearsal & ODIS-FT & HiSSD-FT \\
    \midrule
    \multicolumn{1}{c|}{\multirow{2}[2]{*}{LBF}} & Expert & \underline{-0.62 $\pm$ 0.77} & -98.13 $\pm$ 0.62 & -73.44 $\pm$ 4.06 & -89.69 $\pm$ 4.69 & -0.70 $\pm$ 0.63 & \textbf{0.46 $\pm$ 0.31} & -2.08 $\pm$ 1.35 & -0.75 $\pm$ 0.33 \\
    \multicolumn{1}{c|}{} & Medium & -5.62 $\pm$ 10.00 & -59.38 $\pm$ 6.88 & -49.38 $\pm$ 8.75 & -45.31 $\pm$ 7.81 & -0.78 $\pm$ 7.90 & \textbf{3.12 $\pm$ 3.12} & -3.50 $\pm$ 1.50 & \underline{2.70 $\pm$ 1.30}* \\
    \midrule
    \multicolumn{1}{c|}{\multirow{2}[2]{*}{CN}} & Expert & \textbf{5.83 $\pm$ 0.83} & -10.83 $\pm$ 3.33 & -50.00 $\pm$ 0.00 & -8.75 $\pm$ 1.25 & -7.92 $\pm$ 6.25 & \underline{1.25 $\pm$ 1.25} & -0.94 $\pm$ 1.56 & -3.80 $\pm$ 0.31 \\
    \multicolumn{1}{c|}{} & Medium & 1.25 $\pm$ 2.92 & \textbf{35.00 $\pm$ 4.17} & \underline{33.75 $\pm$ 0.42}* & 33.33 $\pm$ 2.50* & 0.83 $\pm$ 2.50 & 15.83 $\pm$ 10.00 & -7.19 $\pm$ 7.81 & 0.63 $\pm$ 1.88 \\
    \midrule
    \multicolumn{1}{c|}{\multirow{2}[2]{*}{Marines}} & Expert & \textbf{30.80 $\pm$ 2.20} & 19.87 $\pm$ 6.03 & 21.21 $\pm$ 5.58 & 23.21 $\pm$ 12.50 & \underline{30.21 $\pm$ 1.79}* & -5.13 $\pm$ 29.24 & 0.31 $\pm$ 0.59 & 0.67 $\pm$ 0.46 \\
    \multicolumn{1}{c|}{} & Medium & \textbf{16.74 $\pm$ 8.71} & 8.33 $\pm$ 27.08* & -0.32 $\pm$ 6.70 & 2.68 $\pm$ 5.80 & \underline{9.64 $\pm$ 11.72}* & -14.84 $\pm$ 13.28 & 0.13 $\pm$ 3.66 & 2.90 $\pm$ 0.55 \\
    \midrule
    \multicolumn{1}{c|}{\multirow{2}[2]{*}{Stalker-zealot}} & Expert & \underline{16.67 $\pm$ 1.68} & -62.05 $\pm$ 34.82 & -23.44 $\pm$ 3.12 & -56.47 $\pm$ 40.40 & \textbf{21.65 $\pm$ 0.67} & 2.06 $\pm$ 3.40 & 0.27 $\pm$ 0.62 & 0.43 $\pm$ 0.12 \\
    \multicolumn{1}{c|}{} & Medium & 7.29 $\pm$ 6.42* & -7.37 $\pm$ 0.67 & -13.99 $\pm$ 0.60 & -6.92 $\pm$ 10.49 & \textbf{11.31 $\pm$ 9.23} & \underline{7.81 $\pm$ 8.71}* & -0.16 $\pm$ 0.12 & 0.13 $\pm$ 0.10 \\
    \midrule
    \multicolumn{1}{c|}{Protoss} & Medium & \textbf{8.59 $\pm$ 5.60} & -19.69 $\pm$ 1.56 & -16.25 $\pm$ 3.75 & -27.19 $\pm$ 3.44 & 2.19 $\pm$ 2.19 & -4.37 $\pm$ 4.37 & 2.81 $\pm$ 6.56 & \underline{3.63 $\pm$ 4.78}* \\
    \multicolumn{1}{c|}{Zerg} & Medium & \textbf{10.32 $\pm$ 4.49} & -12.40 $\pm$ 4.27 & -2.29 $\pm$ 6.46 & 4.17 $\pm$ 2.10 & \underline{8.73 $\pm$ 5.60}* & -16.09 $\pm$ 11.41 & 1.82 $\pm$ 1.30 & 1.74 $\pm$ 2.10 \\
    \multicolumn{1}{c|}{Terran} & Medium & \textbf{7.29 $\pm$ 1.93} & -2.81 $\pm$ 4.69 & 4.53 $\pm$ 8.56 & 4.69 $\pm$ 5.85 & 1.48 $\pm$ 12.31 & \underline{4.79 $\pm$ 6.70} & 1.87 $\pm$ 1.63 & 1.82 $\pm$ 1.30 \\
    \midrule
    \multicolumn{1}{c|}{\multirow{2}[2]{*}{Reward}} & Expert & \textbf{21.57 $\pm$ 11.20} & 4.19 $\pm$ 8.37 & 2.95 $\pm$ 4.55 & 10.53 $\pm$ 6.22* & \underline{19.05 $\pm$ 5.43}* & -0.70 $\pm$ 9.15 & 18.26 $\pm$ 7.07* & -2.86 $\pm$ 8.86 \\
    \multicolumn{1}{c|}{} & Medium & \textbf{16.13 $\pm$ 2.67} & 0.70 $\pm$ 2.75 & -8.72 $\pm$ 3.50 & -6.93 $\pm$ 0.71 & \underline{10.67 $\pm$ 14.57} & -4.36 $\pm$ 3.29 & -4.08 $\pm$ 0.16 & -5.02 $\pm$ 0.81 \\
    \midrule
    \multicolumn{1}{c|}{\multirow{2}[2]{*}{Dynamics}} & Expert & \textbf{11.32 $\pm$ 1.19} & -45.05 $\pm$ 4.30 & -50.93 $\pm$ 6.85 & -48.52 $\pm$ 6.93 & \underline{-0.20 $\pm$ 0.87} & -38.86 $\pm$ 2.18 & -51.09 $\pm$ 3.66 & -46.34 $\pm$ 4.23 \\
    \multicolumn{1}{c|}{} & Medium & \underline{1.63 $\pm$ 3.17} & -38.16 $\pm$ 1.75 & -40.33 $\pm$ 4.98 & -42.94 $\pm$ 0.93 & \textbf{2.97 $\pm$ 1.30} & -34.19 $\pm$ 0.33 & -41.76 $\pm$ 5.36 & -31.46 $\pm$ 2.44 \\
    \midrule
    \multicolumn{2}{c|}{Overall} & \textbf{9.95} & -19.19 & -17.78 & -16.94 & \underline{7.28}  & -5.55 & -5.69 & -5.04 \\
    \bottomrule
    \end{tabular}%
    }
  \label{tab:app:bwt}%
\end{table*}%

\subsection{Full Learning Curves}\label{subsec:app:curves}
We list here the curves of average performances on all tasks in Figure \ref{fig:curve-total1} (LBF and CN), Figure \ref{fig:curve-total2} (SMAC and SMACv2) and Figure \ref{fig:curve-total3} (MAMuJoCo). Note that FwT can be understood as the area between one curve and the curve of FS.

The results of learning curves present the same observations that skill discovery baselines fail to learn new skills, vanilla methods and EWC are relatively limited due to catastrophic forgetting or plasticity loss, OWL facilitates forward and backward transfer via parameter isolation while COMAD achieves the best overall bidirectional transfer via active skill partition and reuse.

Besides, we can also observe the influence of the two parts design (symmetric and asymmetric) in SMAC tasks, where the last four (Marines) or three (Stalker-Zealot) tasks of each task stream belong to the asymmetric type. Under these heterogeneous task streams, vanilla methods face the problem of balancing between symmetric task skills and asymmetric ones, where FT and Rehearsal become unstable as depicted in Marines tasks and Stalker-Zealot medium task.

\begin{figure*}
    \centering
    \includegraphics[width=0.9\linewidth]{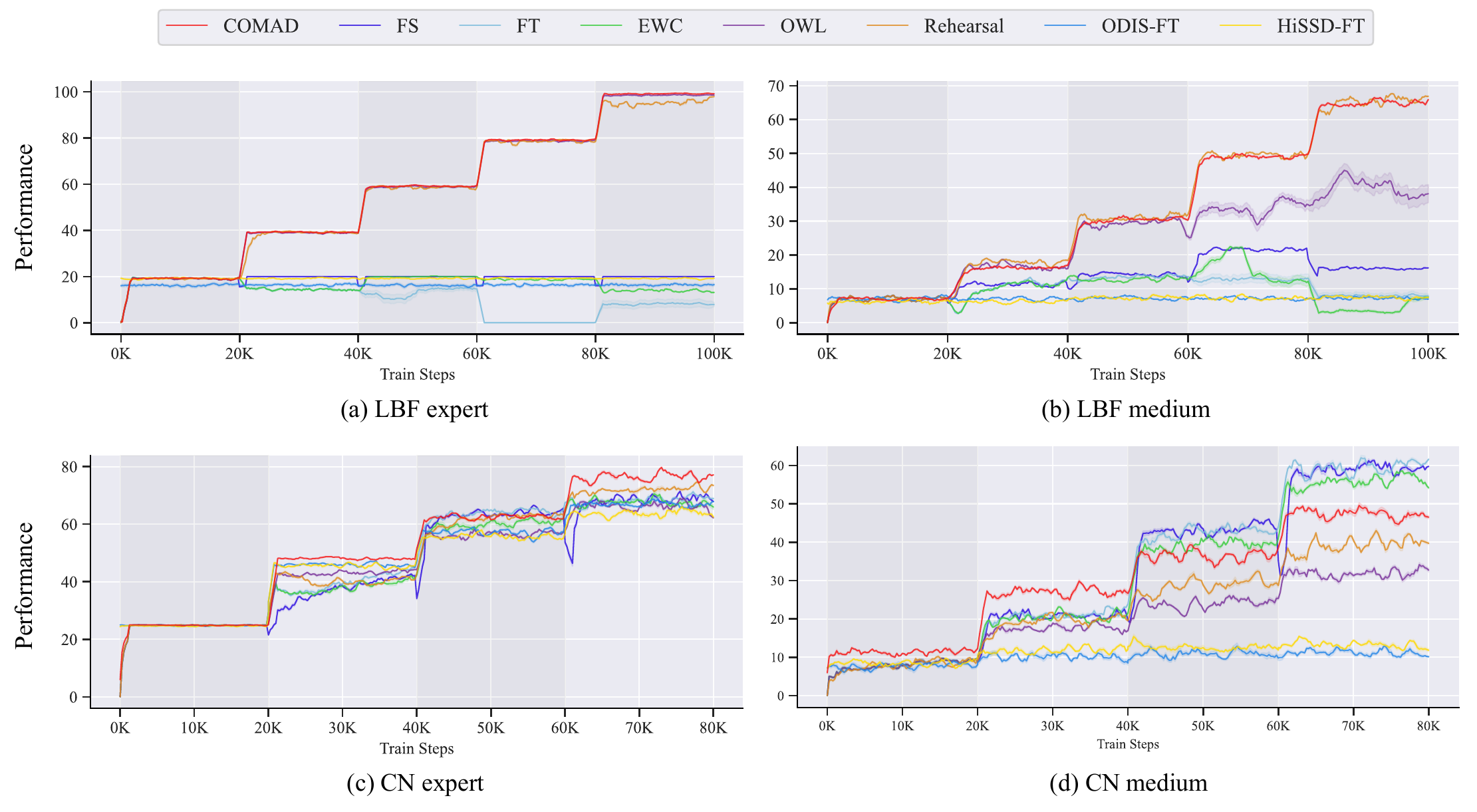}
    \caption{Learning curves on LBF and CN tasks.}
    \label{fig:curve-total1}
\end{figure*}
\begin{figure*}
    \centering
    \includegraphics[width=0.9\linewidth]{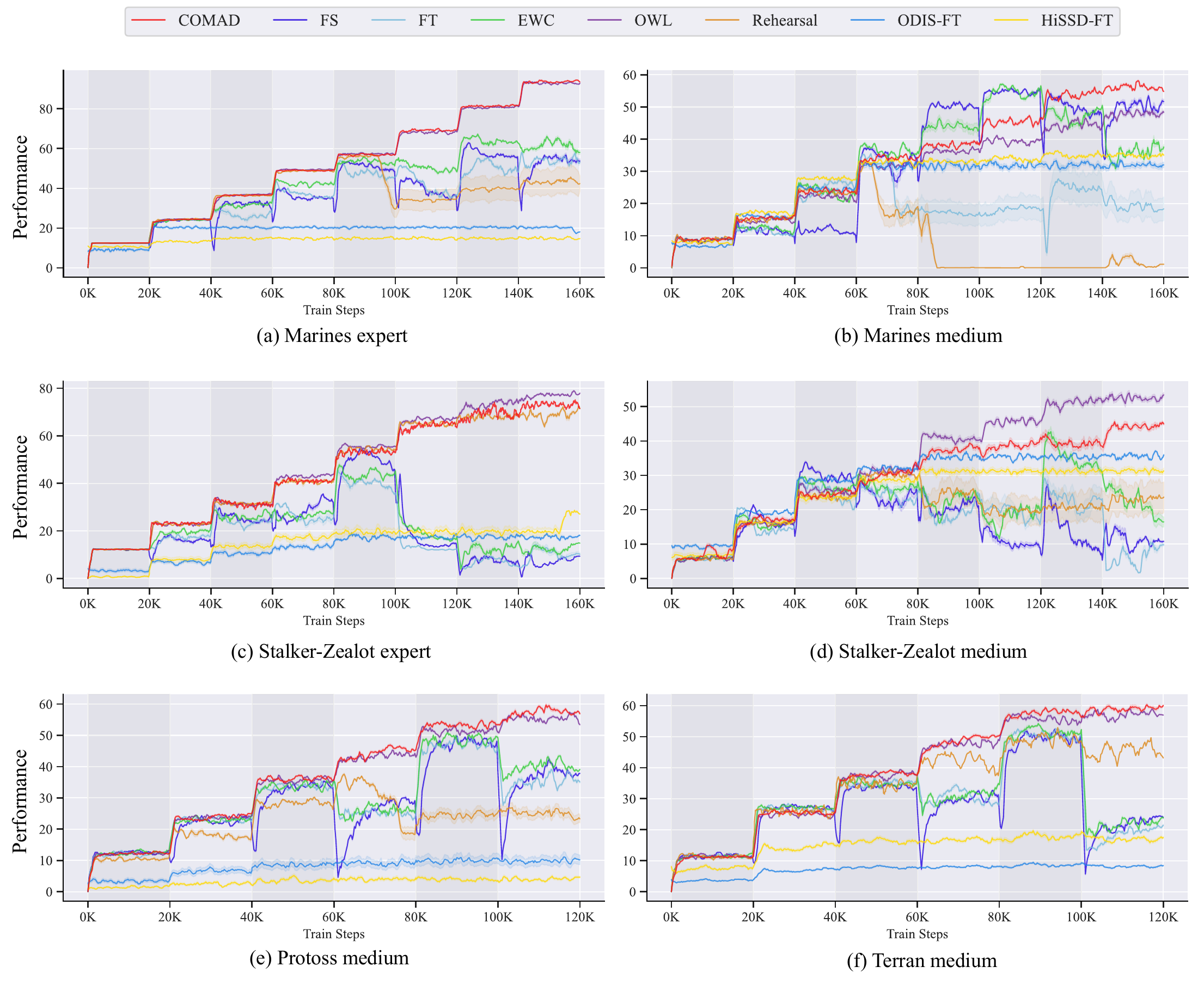}
    \caption{Learning curves on SMAC and SMACv2 tasks.}
    \label{fig:curve-total2}
\end{figure*}
\begin{figure*}
    \centering
    \includegraphics[width=0.9\linewidth]{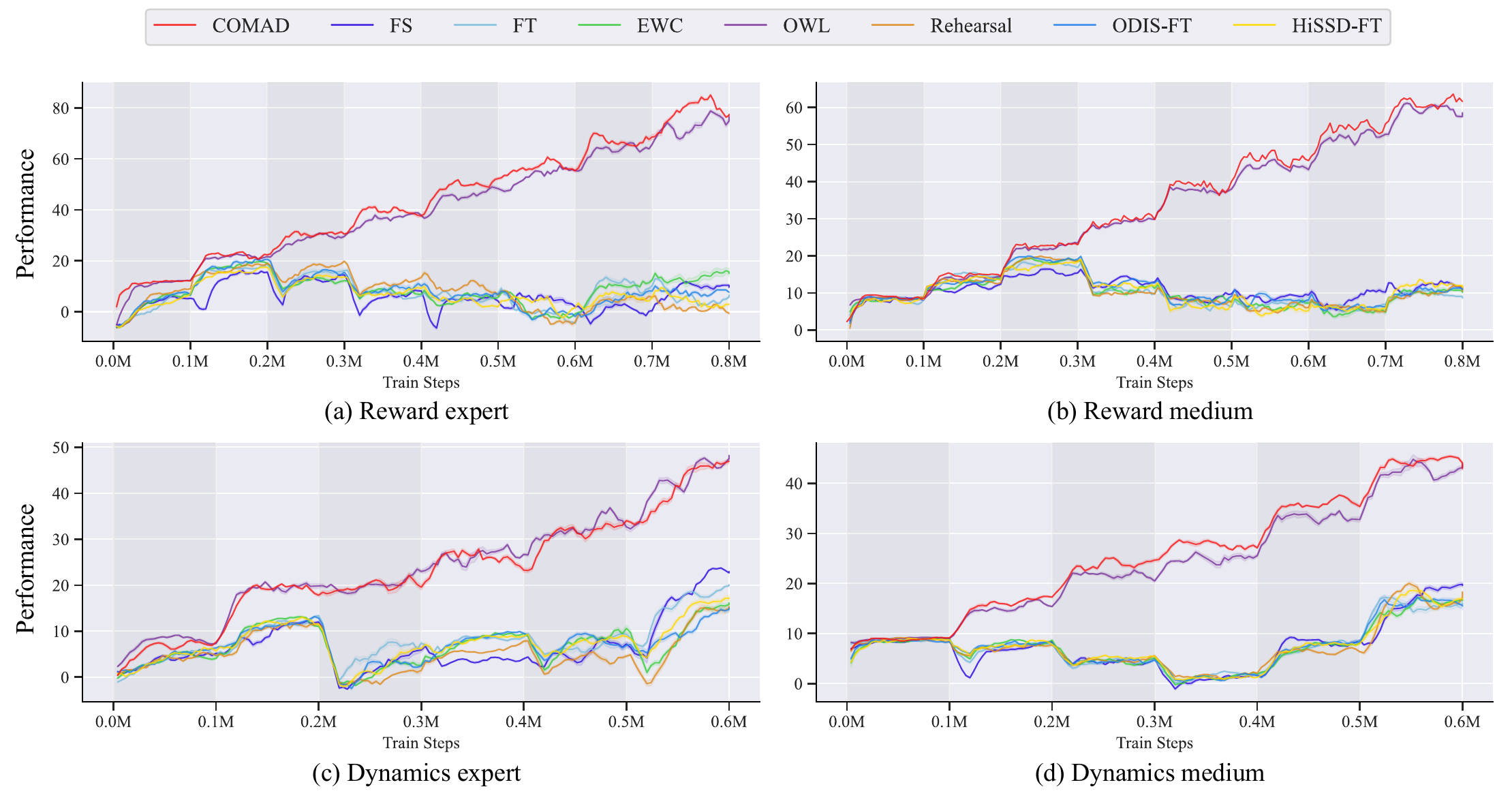}
    \caption{Learning curves on MAMuJoCo tasks.}
    \label{fig:curve-total3}
\end{figure*}


\section{Limitations and Future Work}
{\color{black}
COMAD is proposed as a principled framework for continual offline multi-agent skill discovery under the task-bounded setting. Despite its strong empirical performance on the controlled experiments, it has several limitations and can be further improved for higher flexibility and broader applications.

First, the reusability score combines the Lagrange multipliers and state densities as a proxy for the ideal skill-gating mechanism by reusing skills based on task similarity and state familiarity. It captures whether the current task is similar to any previous task and whether the current state falls within the support of previously learned skills. However, tasks in open-ended environments may require reusing skills based on fine-grained semantic features such as human feedback or natural language guidance. Hence a human-in-the-loop continual reinforcement learning paradigm is an important direction for future work.

Second, COMAD focuses on the task-bounded setting, allowing us to study transfer, forgetting and skill library expansion in a controlled manner, but limits its applicability to settings with unknown or ambiguous task boundaries. Nonetheless, the reusability score can be potentially extended to the task-unbounded setting by implicitly modeling the task boundaries. For example, two tasks can be identified as similar if their performance constraints are positively correlated and thus can both be learned without knowing task boundaries; meanwhile task-agnostic coordination patterns could be extracted by disentangling the action distribution at the same state across different tasks. Based on the above task-agnostic components, we may actively allocate, prune or merge the output heads, maintaining a skill library with stable memory, plasticity and controlled size~\cite{pan2025survey}.

More broadly, evaluating whether the same skill partition-and-reuse principles continue to hold in more open-ended environments such as multi-agent embodied intelligence, larger models such as Vision-Language-Action (VLA) models and semantically complex environments such as human-AI coordination, remains an important next step.
}

\end{document}